\theoremstyle{plain}
\newtheorem{proposition}{Proposition}
\theoremstyle{definition}
\theoremstyle{remark}
\newcommand{\cmark}{\ding{51}}%
\newcommand{\xmark}{\ding{55}}%
\def\eg{\emph{e.g}.}
\def\ie{\emph{i.e}.}
\def\normal{\mathcal{N}(\mathbf{0}, \mathbf{I})}
\def\N{\mathcal{N}}
\def\I{\mathbf{I}}
\def\R{\mathbb{R}}
\def\Pr{\mathbf{Pr}}
\def\xb{\mathbf{x}}
\def\yb{\mathbf{y}}
\def\xo{\boldsymbol{z}}
\def\xot{\boldsymbol{z}_t}
\def\xott{\boldsymbol{z}_{t-1}}
\def\xos{\boldsymbol{z}_0}
\def\xstil{\hat{\xb}_0}
\def\xt{\mathbf{x}_t}
\def\xtt{\mathbf{x}_{t-1}}
\def\yt{\mathbf{y}_{t}}
\def\ytt{\mathbf{y}_{t-1}}
\def\xT{\mathbf{x}_T}
\def\Q{\boldsymbol{Q}}
\def\xs{\mathbf{x}_0}
\def\ys{\mathbf{y}_0}
\def\xf{\mathbf{x}_1}
\def\yf{\mathbf{y}_1}
\def\xsT{\mathbf{x}_{0:T}}
\def\xfT{\mathbf{x}_{1:T}}
\def\yfT{\mathbf{y}_{1:T}}
\def\EE{\mathbb{E}}
\def\ll{\left}
\def\rr{\right}
\def\ptheta{p_\theta}
\def\ll{\left[}
\def\rr{\right]}
\icmltitlerunning{Stochastic Conditional Diffusion Models for Robust Semantic Image Synthesis}
\begin{document}

\twocolumn[
\icmltitle{Stochastic Conditional Diffusion Models for Robust Semantic Image Synthesis}
\icmlsetsymbol{equal}{*}

\begin{icmlauthorlist}
\icmlauthor{Juyeon Ko}{equal,yyy}
\icmlauthor{Inho Kong}{equal,yyy}
\icmlauthor{Dogyun Park}{yyy}
\icmlauthor{Hyunwoo J. Kim}{yyy}
\end{icmlauthorlist}

\icmlaffiliation{yyy}{Department of Computer Science, Korea University, Republic of Korea}

\icmlcorrespondingauthor{Hyunwoo J. Kim}{hyunwoojkim@korea.ac.kr}

\icmlkeywords{diffusion models, semantic image synthesis, conditional generation}

\vskip 0.3in
]



\printAffiliationsAndNotice{\icmlEqualContribution} 

\begin{abstract}
Semantic image synthesis (SIS) is a task to generate realistic images corresponding to semantic maps (labels).
However, in real-world applications, SIS often encounters noisy user inputs. 
To address this, we propose Stochastic Conditional Diffusion Model (SCDM), which is a robust conditional diffusion model that features novel forward and generation processes tailored for SIS with noisy labels.
It enhances robustness by stochastically perturbing the semantic label maps through Label Diffusion, which diffuses the labels with discrete diffusion.
Through the diffusion of labels, the noisy and clean semantic maps become similar as the timestep increases, eventually becoming identical at $t=T$.
This facilitates the generation of an image close to a clean image, enabling robust generation.
Furthermore, we propose a class-wise noise schedule to differentially diffuse the labels depending on the class.
We demonstrate that the proposed method generates high-quality samples through extensive experiments and analyses on benchmark datasets, including a novel experimental setup simulating human errors during real-world applications.
Code is available at \href{https://github.com/mlvlab/SCDM}{https://github.com/mlvlab/SCDM}.
\end{abstract}
\section{Introduction}
\begin{figure*}[t!]
\centering
\includegraphics[width=1.0\textwidth]{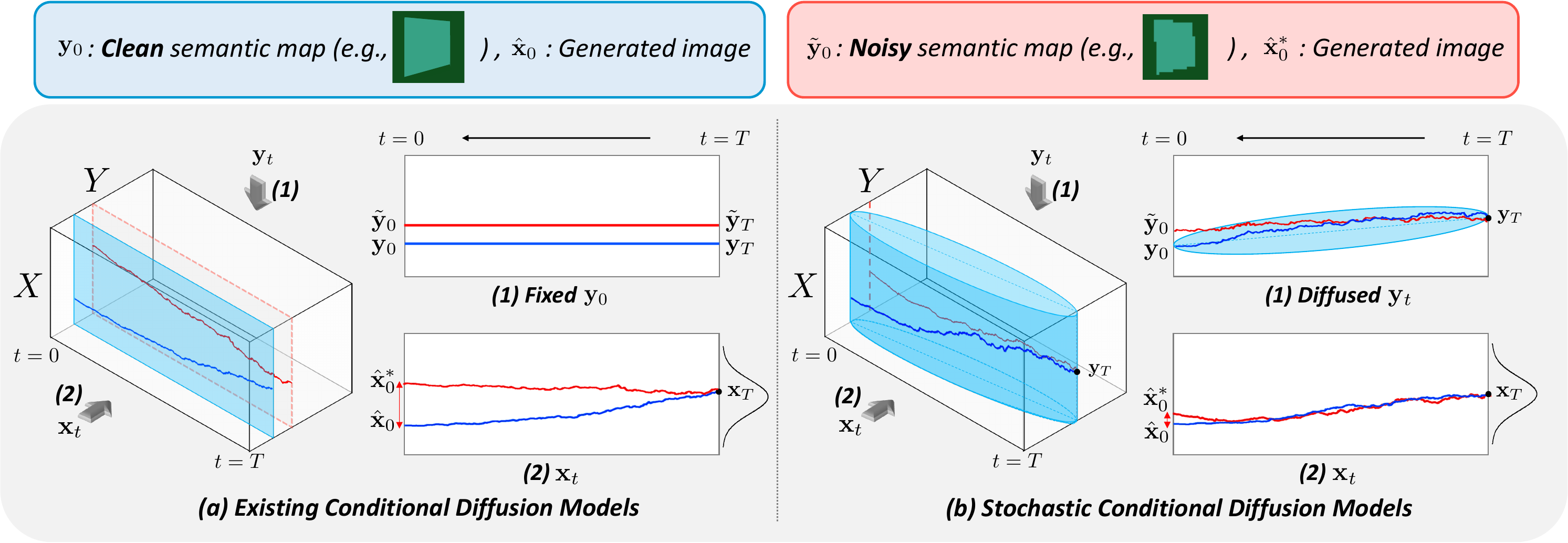}
\caption{\textbf{Visualization of conditional generation.}
Each colored trajectory represents a sampling trajectory conditioned on a noisy semantic map $\tilde{\yb}_0$~\textcolor{red}{(\textbf{Red})} and the corresponding clean semantic map $\ys$~\textcolor{blue}{(\textbf{Blue})}.
They are projected onto the \textbf{(1)} semantic map space and the \textbf{(2)} image space, sharing the same $\xb_T$.
\textbf{(a)} Existing conditional diffusion models (baseline) use a fixed condition $\ys$ over the generation process, and the gap between $\tilde{\yb}_t$ and $\yt$ yields erroneous conditional score estimation at each timestep $t$.
\textbf{(b)} In contrast, our method stochastically perturbs the condition with masking, resulting in a trajectory $\yb_{1:T}$ following a probability distribution $q(\yb_{1:T}|\yb_0)$, as depicted with blue shaded areas around the $\yt$ trajectory.
This makes the intermediate trajectories, \ie, $\yb_{1:T}|\yb_0$ and $\tilde{\yb}_{1:T}|\tilde{\yb}_0$, close to each other, enhancing the robustness against the noisy labels.
}
\label{fig:motivation2}
\end{figure*}
Semantic image synthesis (SIS) is a type of image translation that converts a given semantic map (label) into a photo-realistic image, which is the inverse of semantic segmentation.
It is also one of the conditional image generation tasks with semantic label maps serving as the input conditions. 
The problem is formulated as approximating the conditional distribution $q(X|Y)$ where $X$ and $Y$ are the random variables denoting the image and the semantic map, respectively.
SIS is addressed by adopting conditional generative models~\cite{yang2019diversity,tang2020dual,DBLP:conf/eccv/NtavelisRKGT20,tan2021diverse} such as conditional GANs~\cite{DBLP:conf/nips/GoodfellowPMXWOCB14,mirza2014conditional}.
Given a semantic label map $\mathbf{y}$, these works sample a new image $\hat{\mathbf{x}}$ from the learned conditional distribution $p_\theta(X|Y=\mathbf{y})$.
As diffusion models~\cite{sohl2015deep,ho2020denoising,song2020score} have gained significant attention on various generation tasks~\cite{ramesh2021zero,dhariwal2021diffusion,CouaironVSC23}, diffusion models for SIS have been recently studied by a few works~\cite{wang2022pretraining, xue2023freestyle}.
Specifically, SDM~\cite{wang2022semantic} embeds the input conditions similarly to SPADE~\cite{park2019semantic} and integrates diffusion models into the context of SIS.
LDM~\cite{rombach2022high} learns a diffusion model on latent vectors with condition encoders.

SIS has a wide range of real-world applications such as photo editing or content creation~\cite{chen2017photographic,park2019semantic,zhu2020sean,tang2020local}.
In practice, SIS often involves noisy input $\tilde{\mathbf{y}}$ from users.
For instance, users mark specific areas as the classes they wish to synthesize, and the marks come with errors such as jagged edges and incompletely marked areas.
Even labels by professional annotators in benchmark datasets occasionally contain mistakes and show inconsistency between annotators, and the input from end users would inevitably entail noise.
This poses the gap between the label distributions for training and inference. 
Usually, models are trained with clean labels $\yb$ in benchmark datasets, whereas generation is performed with noisy labels $\tilde{\yb}$.
In the case of diffusion models, the model is exposed to the erroneous guidance throughout $T$ timesteps, \ie, $t=T$ to $t=1$, generating the corresponding noisy image.

To minimize the gap, we generate samples with stochastically perturbed labels for both training and inference.
Specifically, we propose to diffuse the semantic label map $\ys$ to $\yb_1$, $\ldots$, $\yb_T$ and use them throughout the generation process.
Assume there exists a clean semantic map $\yb$ corresponding to the noisy one $\tilde{\yb}$.
Then, utilizing discrete diffusion with an absorbing state allows us to make the intermediate noisy map $\tilde{\yb}_t$ and clean map $\yb_t$ gradually become similar, as they are masked and eventually become identical at $t=T$.
Since the trajectories $\yb_{1:T}$ and $\tilde{\yb}_{1:T}$ provided to the model during the generation process are similar, \ie, $\yb_t$ and $\tilde{\yb}_t$ get closer than $\yb$ and $\tilde{\yb}$, the generated image is close to the clean image, as illustrated in Figure~\ref{fig:motivation2}.

In this paper, we introduce \textbf{Stochastic Conditional Diffusion Model~(SCDM)}, a novel conditional diffusion model specifically designed to enhance robustness on noisy labels.
Our SCDM stochastically perturbs the semantic maps with \textit{Label Diffusion} and conditions image generation on the \textit{diffused} labels.
We also incorporate label statistics and develop a new class-wise noise schedule for labels to enhance the generation quality of small and rare classes.
Moreover, the generation process of SCDM entails two heterogeneous diffusion processes: a \textit{discrete forward process} for labels and a \textit{continuous reverse process} for images.
We empirically demonstrate that SCDM can approximate $q(X|Y)$ and present theoretical analysis.
Additionally, we introduce a new noisy SIS benchmark and prove the robustness of SCDM under noisy conditions.

To summarize, our \textbf{contributions} are as follows:
\begin{itemize}
    \item We propose Stochastic Conditional Diffusion Model (SCDM), a robust conditional diffusion model for SIS that incorporates Label Diffusion, a discrete diffusion process for labels that enables differential conditioning on semantic labels.
    \item We provide theoretical analyses of our class-wise schedule and the relationship between the class guidances (implicit classifier gradients) induced by fixed labels and label diffusion.
    \item We introduce a new SIS benchmark designed to assess generation performance under noisy conditions, simulating human errors that can occur during real-world applications. 
    \item We conduct extensive experiments and analyses on benchmark datasets and achieve competitive results.
\end{itemize}
\section{Related Works}
\paragraph{Semantic Image Synthesis.}
Since Pix2pix~\cite{isola2017image} have established a general framework for SIS, conditional Generative Adversarial Networks (GANs) are widely used in SIS~\cite{yang2019diversity,zhu2017toward,DBLP:conf/eccv/NtavelisRKGT20,tang2020local,tan2021diverse,tan2021efficient,shi2022retrieval}.
SPADE~\cite{park2019semantic} proposes spatially-adaptive normalization and successfully preserves semantic information.
Since SPADE, many normalization-based approaches have been presented~\cite{tan2021diverse,lv2022semantic}.
For instance, 
CLADE~\cite{tan2021efficient} adopts class-adaptive normalization and 
RESAIL~\cite{shi2022retrieval} proposes retrieval-based spatially adaptive normalization, and OASIS~\cite{sushko2020you} designs the discriminator as a semantic segmentation network. 
INADE~\cite{tan2021diverse} utilizes class-level conditional modulation, and SAFM~\cite{lv2022semantic} proposes shape-aware position descriptors to modulate the features.

Recently, diffusion models (DMs) also have been proposed for SIS.
SDM~\cite{wang2022semantic} encodes the semantic label map with SPADE.
LDM~\cite{rombach2022high} leverages a latent space for the conditions including the semantic maps.
PITI~\cite{wang2022pretraining} pre-trains the semantic latent space and finetunes it with the RGB-preprocessed semantic mask images, rather than using the maps with class indexes like most studies including ours.
FLIS~\cite{xue2023freestyle} incorporates not only semantic label maps but also additional text inputs. 
Most of these works have recognized the applicability of SIS in real-world scenarios.
However, to the best of our knowledge, our method is the first DM-based model to address the issue of noisy user inputs in SIS.

\paragraph{Conditional Diffusion Models.}
By modifying the U-Net \cite{ronneberger2015u} architecture to incorporate the conditions into the network, conditional diffusion models are prevalently leveraged for conditional generations.
ADM~\cite{dhariwal2021diffusion}, for example, encodes the class embedding into the network with AdaGN and utilizes classifier guidance. 
LDM~\cite{rombach2022high} extracts features from the various conditions and encodes them into the network through concatenation or cross-attention.
DDMI~\cite{park2024ddmi} also generates data with latent diffusion but in a domain-agnostic manner, by linking data to a continuous function within a shared latent space.
Meanwhile, SDM~\cite{wang2022semantic} replaces GroupNorm of U-Net decoder with SPADE to embed the labels into the network in a spatially adaptive manner.
UNIT-DDPM~\cite{sasaki2021unit} uses two different diffusion models and a domain translation function for the training and sampling of an image-to-image translation model.
Recently, T2I diffusion models such as FLIS~\cite{xue2023freestyle}, ControlNet~\cite{zhang2023adding}, and GLIGEN~\cite{li2023gligen} have paved the way for adding spatial controls to large pretrained diffusion models.
Our method diffuses labels with a carefully designed discrete diffusion process and generates images conditioned on the diffused labels, formulating a novel conditional diffusion model.
\section{Preliminaries}
In this work, we consider conditional diffusion model (DM) $p_\theta(\xs|\ys)$ for SIS task.
We briefly introduce existing DM-based models that learn the conditional distribution $q(\xs|\ys)$ by estimating a reverse process given a fixed label $\ys$ that approximates an unconditional forward process $q(\xb_{1:T}|\xs)$.
Then, as our method perturbs labels $\ys$ using a discrete diffusion, we summarize basic concepts of a discrete state space diffusion process.
\vspace{-10pt}
\paragraph{Semantic Image Synthesis with Diffusion Models.}
\label{sec:background}
In previous methods, the forward process is defined as an unconditional diffusion model that adds Gaussian noise to the image as follows:
\vspace{-8pt}
\begin{equation}
\label{eq:img_diff_onestep}
   q(\xt|\xtt) := \N \left (\xt; \sqrt{\frac{\alpha_t}{\alpha_{t-1}}} \xtt, \left (1-\frac{\alpha_t}{\alpha_{t-1}} \right ) \I \right ),
\end{equation}
where the decreasing sequence $\alpha_{1:T}$ defines the noise level with strictly positive $\alpha_t$.
For the forward process, 
we have a closed-form sampling step of $\xt$ at an arbitrary timestep $t$, $q(\xt|\xs) = \N(\xt; \sqrt{\alpha_t} \xs, (1-\alpha_t) \I)$, as it is defined as a Markov chain.
The reverse process $p_\theta(\xb_{0:T}|\ys)$ is also defined as a Markov chain with learned transitions starting from $p(\xb_T)=\N(\xb_T; \mathbf{0}, \I)$ given as:
\vspace{-8pt}
\begin{equation}
    \ptheta(\xsT|\ys) := p(\xT)\prod^T_{t=1}\ptheta(\xtt|\xt, \ys).
\end{equation}
It is usually learned by a deep neural network parameterized by $\theta$ that represents Gaussian transitions given as:
\begin{equation}
    p_\theta(\xtt|\xt,\ys) := \N(\xtt; \boldsymbol{\mu}_\theta(\xt, \ys, t), \mathbf{\Sigma}_\theta(\xt, \ys, t)).
\end{equation}
The conditional DM can be trained with the hybrid loss from \cite{nichol2021improved} as:
\begin{align}
    \mathcal{L}_\text{hybrid} &= \mathcal{L}_\text{simple} + \lambda \mathcal{L}_\text{vlb}, \\
    \mathcal{L}_\text{simple} &= \EE_{t,\xs, \ys, \epsilon} \ll||\epsilon - \epsilon_\theta(\sqrt{\alpha_t}\xs + \sqrt{1-\alpha_t}\epsilon, \ys, t)||_2^2\rr, \nonumber 
 \\
    \mathcal{L}_\text{vlb} &= D_\text{KL}\left(p_\theta(\xtt|\xt,\ys)||q(\xtt|\xt,\xs)\right), \nonumber 
\end{align}
where $\lambda$ is a balancing hyperparamer and $\epsilon_\theta$ is a noise prediction model.
\vspace{-8pt}
\paragraph{Discrete State Space Diffusion Process.}
For a discrete categorical random variable $z \in \{ 1, ..., C\}$ with $C$ categories, DMs for discrete state spaces~\cite{hoogeboom2021argmax, austin2021structured} are defined with transition matrices where $[\Q_t]_{ij} = q(z_t=i|z_{t-1}=j)$ and $\Q_t \in \R^{C\times C}$.
The forward process is then defined as follows:
\begin{equation}
\label{eq:segmap_diff_onestep}
    q(\xot|\xott) := \text{Cat}(\xot; \boldsymbol{p}= \Q_t \xott),
\end{equation}
where $\xo$ is the one-hot column vector~($\mathbf{e}_z$) and $\text{Cat}(\xo;\boldsymbol{p})$ is a categorical distribution parameterized by $\boldsymbol{p}$.
We can sample $\xot$ at an arbitrary timestep $t$ from the following marginal starting from $\xos$ in closed form, due to the Markov property:
\begin{equation}
\begin{gathered}
\label{eq:segmap_marginal}
    q(\xot|\xos)  := \text{Cat}(\xot; \boldsymbol{p} = \overline{\Q}_t \xos), \\
    \text{where } \overline{\Q}_t = \Q_t \Q_{t-1} ... \Q_1.
\end{gathered}
\end{equation}

As the design choice of the transition matrix $\Q_t$ determines the diffusion process, one has to choose the matrix carefully.
For instance, D3PM~\cite{austin2021structured} controlled the data corruption by designing the matrix with domain knowledge or structure such as text token embedding distance.
\section{Stochastic Conditional Diffusion Model}
\begin{figure*}[!ht]
\includegraphics[width=1.0\textwidth]{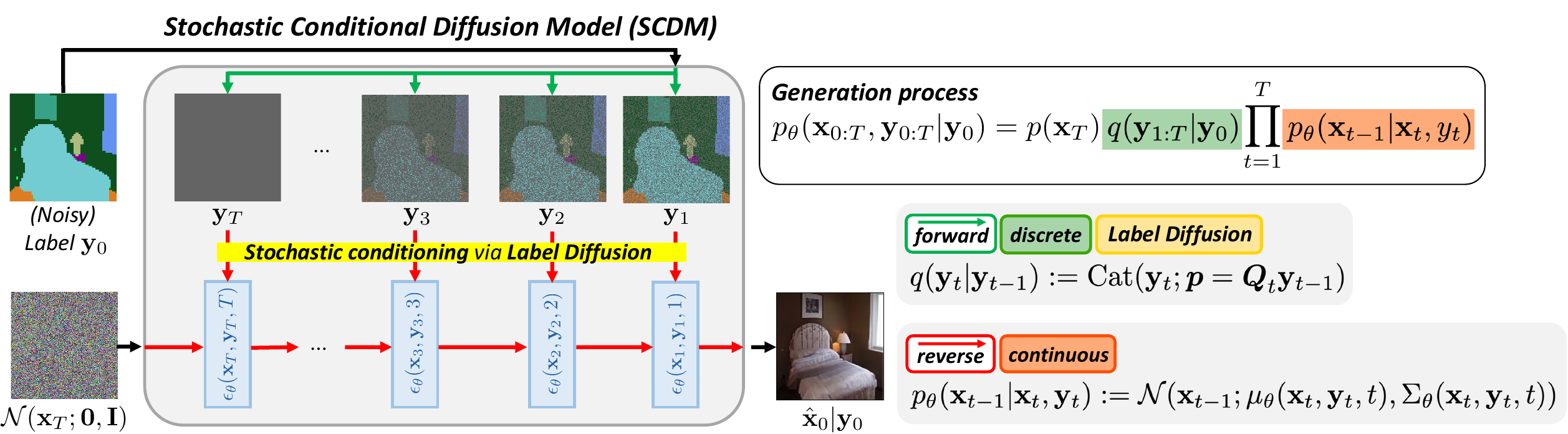}
\caption{
\textbf{Generation process of SCDM.} 
The Stochastic Conditional Diffusion Model~(SCDM) is a robust conditional diffusion model for semantic image synthesis.
SCDM consists of a \textit{discrete forward process} for labels and a \textit{continuous reverse process} for images.
It improves the robustness to noisy semantic labels as well as generation performance on clean semantic labels.
$\xos$ denotes the $i$-th pixel of the semantic map, \ie, $\xos = \ys^{i}$ where $\ys = \{ \ys^{1},...,\ys^{H \times W} \}$.
}
\label{fig:main}
\end{figure*}
We propose Stochastic Conditional Diffusion Model (SCDM), a robust and novel conditional diffusion model for semantic image synthesis.
In this section, we introduce our forward and generation processes of SCDM (Section~\ref{sec:label-diffusion-model}), discrete label diffusion process (Section~\ref{sec:transition-matrix-and-noise-scheduling}), and the training and sampling schemes of our method (Section~\ref{sec:training-sampling}).
\subsection{Definitions}
\label{sec:label-diffusion-model}
Stochastic Conditional Diffusion Model (SCDM) is a class of conditional diffusion models approximating the conditional distribution $q(\xs|\ys).$
It conditions on diffused labels, \ie, $\yb_{1:T}$, given $\ys$.

Our SCDM is defined as follows:
\begin{equation}
    \ptheta(\xs|\ys) := \int\int\ptheta(\xsT,\yfT|\ys)\mathrm{d\xb}_{1:T}\mathrm{d\yb}_{1:T},
\end{equation}
where $\xb_{1:T}$ and $\yb_{1:T}$ are latents with the same dimensionality as $\xs$ and $\ys$ respectively, and $(\xs,\ys) \sim q(\xs,\ys)$.\\

\noindent\textbf{Forward process.} 
SCDM consists of two diffusion processes: a \textit{continuous} diffusion process $q(\xt|\xtt)$ for images as in Eq.~\eqref{eq:img_diff_onestep} and a \textit{discrete} diffusion process $q(\yt|\ytt)$ for categorical semantic labels as in Eq.~\eqref{eq:segmap_diff_onestep}.
We name the discrete diffusion process Label Diffusion.
Then, the forward process of SCDM is defined as follows:
\begin{align}
    q(\xfT,\yfT|\xs,\ys) &:= \prod^T_{t=1}q(\xt,\yt|\xtt, \ytt), \\
    \label{eq:forward}
    q(\xt,\yt|\xtt,\ytt) &:= q(\xt|\xtt)q(\yt|\ytt).
\end{align}
For simplicity, we employed the same $t$ for both the image $\xb$ and the label $\yb$, but the noise levels are separately controlled.
We manually designed a different noise schedule for the labels, referred to as the class-wise noise schedule, which will be discussed in the following section.
Consequently, the diffusion processes for $\xb$ and $\yb$ are not synchronized regarding noise levels, and the synchronization between them is not necessary.

Note that although we diffuse images and labels independently, 
$\xt$ and $\yt$ are still correlated as $\xs$ and $\ys$ are dependent (\ie, $\ys$ is deterministically decided given $\xs$).

\noindent\textbf{Generation process.} We define the joint distribution for our generation process as:
\begin{align}
    \ptheta(\xsT,\yfT|\ys) &:= p(\xT)q(\yfT|\ys)\prod^T_{t=1}\ptheta(\xtt|\xt, \yt), \\
    \ptheta(\xtt|\xt,\yt) &:= \N(\xtt; \mathbf{\mu}_\theta(\xt,\yt,t), \mathbf{\Sigma}_\theta(\xt,\yt,t)),
\end{align}
where $q(\yfT|\ys)$ is the discrete Label Diffusion forward process and $\ptheta(\xtt|\xt,\yt)$ is the continuous reverse process.
As $\ys$ is given in SIS, we only have to sample the image $\xstil$ and we do not need a reverse process for the label.
Therefore, we define our generation process with the forward process $q(\yfT|\ys)$ and the intermediate $\yfT$ are obtained from $\ys$ without any neural network evaluations.
\subsection{Label Diffusion}
\label{sec:transition-matrix-and-noise-scheduling}
We introduce our Label Diffusion, a new discrete diffusion with \textit{label masking} and \textit{class-wise noise scheduling}.

\subsubsection{Transition Matrix for Label Masking}
To gradually erase the information of the semantic map and increase similarity among different maps as $t=0 \rightarrow t=T$, we designed our Label Diffusion process by progressively masking labels.
In other words, the original semantic labels are converted into the absorbing state (\ie, $\texttt{[mask]}$) with some probability at each timestep.
Consequently, all semantic maps eventually become identical at $t=T$, each filled with $\texttt{[mask]}$ at every pixel.
In addition, as the semantic labels are discrete variables, \ie, classes, it is natural to adopt the discrete diffusion.

Given semantic classes $C$ and absorbing state $\texttt{[mask]}$, 
we define the transition matrix $\Q_t \in \R^{(C+1)\times(C+1)}$ at a timestep $t$ as:
\vspace{-2mm}
\begin{equation}
    \left[\Q_{t}\right]_{i j}= \begin{cases} 1-\beta_{t,c} & \text { if } \quad i=j=c, \\ \beta_{t,c} & \text { if } \quad i=C+1, j=c, \\ 1 & \text { if } \quad i=j= C+1, \\ 0 & \text{ otherwise,}\end{cases} 
\end{equation}
where the absorbing state is added as $(C+1)$-th class and $\beta_{t,c}$ is the probability of a label of class $c$ to be masked. 
Note that class-wise defined probability $\beta_{t,c}$ enables class-wise \textit{noise scheduling}, which will be further discussed in the following section.
We assume that $\boldsymbol{Q}_t$ is applied to each pixel of the semantic map independently.
Since $\texttt{[mask]}$ is the absorbing state, once the state is in $C+1$ then the next state is always $C+1$.

In addition, the probability $q(\xot|\xos)$ at $t$ starting from $\xos$, which is one-hot vector with 1 at $c$-th entry as in Eq.~\eqref{eq:segmap_marginal}, is:
\vspace{-1mm}
\begin{equation}
    \label{eq:marginal}
    q(\xot|\xos)  = \text{Cat}\left(\xot; \boldsymbol{p} = \overline{\Q}_t \xos = (1-\gamma_{t,c})\xos+ \gamma_{t,c} \mathbf{e}_{C+1}\right),
\end{equation}
where $\gamma_{t,c} = 1-\prod^t_{i=1} (1-\beta_{i,c})$ denotes the probability that a semantic label $c$ has been assigned to the absorbing state until timestep $t$ and $\mathbf{e}_{C+1}$ is the one-hot column vector where $(C+1)$-th entry is 1.
Utilizing $\gamma_{t,c}$ not only simplifies the implementation of the transition kernel but also reduces memory consumption in the generation process.
The entire trajectory $\yb_{1:T}|\ys$ can be efficiently represented with a single $\mathbb{R}^{H\times W}$ matrix.
For more details, see Appendix~\ref{sec:supp-trajectory}.

\subsubsection{Noise Scheduling} 
We observe that the semantic information of small objects in an image is prone to be lost at a relatively early stage of diffusion compared to larger objects. 
Moreover, for a rare class of objects in the dataset, it would be hard to learn their semantics if their labels are masked quickly.
Thus, we propose a \textbf{class-wise noise schedule} to differentially transform semantic labels depending on the class.
We designed $\gamma_{t,c}$ to ensure that labels occupying smaller areas and rarely appearing in the dataset are transitioned into the absorbing state more slowly and at a later time.
The class-wise schedule improves generation quality for small and rare objects. 
For more details, see Section~\ref{sec:noise_schedule_analysis} and Appendix~\ref{sec:supp-classwise}.

For a given class $c$, we introduce $\psi_c$ as defined in Eq.~\eqref{eq:itf} and $\phi_c$ in Eq.~\eqref{eq:idf}.
These terms take into account the area ($\approx$ object size) and frequency of the class $c$, respectively, for noise scheduling.
We estimated $\psi$ and $\phi$ with training data.
Using $\psi$ and $\phi$, 
we define $\gamma_{t,c}$ for the class-wise noise schedule as:
\begin{gather}
\label{eq:gamma_t}
    \gamma_{t,c} := \frac{(\psi_c \phi_c)^{\eta \frac{t}{T}}-1}{(\psi_c \phi_c)^{\eta} -1}, \\
\label{eq:itf}
    \psi_c = \EE_{\xb \in \mathcal{X}_c} \Big[ \Pr(y_{ij}=c|\xb) \Big]^{-1}, \\
\label{eq:idf}
    \phi_c = \log \Big( \Pr(\xb\in \mathcal{X}_c)^{-1} \Big),
\end{gather}

where $\mathcal{X}_c$ is the set of images containing class $c$, $y_{ij}$ is the class label of the semantic map $\yb$ at $(i,j)$,
and $\eta$ is a hyperparameter.
This properly slows down the label diffusion of small and rare objects.
We provide a visual aid for $\gamma_{t,c}$ with different $\psi_c\phi_c$ values in Appendix~\ref{sec:supp-psiphi}.

\begin{proposition}
\label{lemma1}
For $\gamma_{t,c}$ in Eq.~\eqref{eq:gamma_t} with $\psi_c\phi_c > 1$ for all $c$ and $t<T$,
\begin{equation*}
    \lim_{\eta \rightarrow 0} \gamma_{t,c} = \frac{t}{T} \;\;\;\text{ and}\;\;\lim_{\eta \rightarrow \infty} \gamma_{t,c} = 0.
\end{equation*}
\end{proposition}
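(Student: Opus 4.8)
The plan is to reduce the statement to a one-variable limit in $\eta$ by abbreviating $a := \psi_c\phi_c$ and $s := t/T$, so that $\gamma_{t,c} = (a^{\eta s}-1)/(a^{\eta}-1)$. The hypotheses translate cleanly: $\psi_c\phi_c>1$ gives $a>1$, and $1\le t<T$ gives $0<s<1$. I would first note that $a>1$ guarantees $a^{\eta}-1>0$ for every $\eta>0$, so the ratio is well defined and smooth on $(0,\infty)$, and both one-sided limits are genuine limits of this function.

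For the limit $\eta\to 0$, I would observe that numerator and denominator both vanish, yielding a $0/0$ indeterminate form. I would resolve it either by L'H\^opital's rule in $\eta$ — differentiating $a^{\eta s}-1$ and $a^{\eta}-1$ gives $s(\ln a)\,a^{\eta s}$ over $(\ln a)\,a^{\eta}$, which tends to $s$ as $\eta\to 0$ — or, equivalently, by the first-order expansion $a^{x}=1+x\ln a+o(x)$, so that $a^{\eta s}-1\sim \eta s\ln a$ and $a^{\eta}-1\sim \eta\ln a$ and the ratio tends to $s=t/T$. Either route uses $\ln a\neq 0$, which is exactly where $a>1$ enters.

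For the limit $\eta\to\infty$, I would factor the dominant growth out of the denominator and rewrite $\gamma_{t,c}=a^{\eta(s-1)}\cdot \frac{1-a^{-\eta s}}{1-a^{-\eta}}$. Since $a>1$ and $s>0$, both $a^{-\eta s}\to 0$ and $a^{-\eta}\to 0$, so the second factor tends to $1$. The decisive point is the exponent $s-1<0$, which follows precisely from the strict inequality $t<T$; combined with $a>1$ this forces $a^{\eta(s-1)}\to 0$, and hence $\gamma_{t,c}\to 0$.

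I expect no serious obstacle, since this is elementary calculus; the only thing that needs care is tracking where each strict hypothesis is used. Specifically, $a>1$ is needed both to keep the denominator nonzero for finite $\eta$ and to make $a^{-\eta}\to 0$ in the second limit, while $s<1$ (\ie, $t<T$) is what makes the exponent negative so that the second limit collapses to $0$ rather than to a positive constant. A short remark confirming that the boundary case $t=T$ behaves differently (the ratio is identically $1$ there) would make clear why the hypothesis is stated as $t<T$.
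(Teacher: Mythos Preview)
Your proposal is correct and essentially identical to the paper's own proof: both use L'H\^opital's rule (the paper writes it via difference quotients) to resolve the $0/0$ form as $\eta\to 0$, and both divide through by $a^{\eta}$ to expose the negative exponent $s-1<0$ for the $\eta\to\infty$ limit. Your notational abbreviation $a,s$ and the explicit tracking of where each strict hypothesis is used are mild expository improvements, but the argument is the same.
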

Interestingly, our class-wise schedule generalizes the linear and uniform schedule, and no Label Diffusion. 
As $\eta \rightarrow 0$, the class-wise schedule defined in \eqref{eq:gamma_t} converges to a linear and uniform noise schedule, \ie, $\frac{t}{T}$.
Labels across all classes have the same probability to be masked, and the marginal probability \eqref{eq:marginal} linearly increases. 
This schedule is the same as the one leveraged in absorbing-state D3PM~\cite{austin2021structured}.
Also, as $\eta \rightarrow \infty$, the masking probability approaches zero.
Formally, this property is summarized in Proposition \ref{lemma1}, and its proof is provided in Appendix~\ref{sec:supp-lemma1-proof}.
\subsection{Training and Sampling}
\label{sec:training-sampling}
We train our network with the following loss function:
\begin{equation}
\label{eq:loss}
    \mathcal{L} = \mathcal{L}_\text{simple} + \lambda \mathcal{L}_\text{vlb},
\end{equation}
\begin{equation}
    \mathcal{L}_\text{simple} = \EE_{t,\xs, \yt, \epsilon} \ll||\epsilon - \epsilon_\theta(\sqrt{\alpha_t}\xs + \sqrt{1-\alpha_t}\epsilon, \yt, t)||_2^2\rr,
\end{equation}
\begin{equation}
    \mathcal{L}_\text{vlb} = D_\text{KL}\left(p_\theta(\xtt|\xt,\yt)||q(\xtt|\xt,\xs)\right),
\end{equation}
where $\alpha_t$ determines the noise level for input image $\xb$ at timestep $t$, $\epsilon \sim \normal$ is a Gaussian noise, and $\lambda$ is a balancing hyperparameter.
This is similar to the hybrid loss~\cite{nichol2021improved}, with a slight adaptation of using $\yt$.
See Appendix~\ref{sec:supp-vlb-proof} for a detailed derivation of the objective function.
Note that Label Diffusion does not significantly impact the training cost of the main diffusion.

To generate a sample, by the definition of the generation process, we first sample $\yfT$ from the semantic map $\ys$ and feed them sequentially to the process.
In addition, we formulate the classifier-free guidance~\cite{ho2022classifier} in our model:
\begin{equation}
\label{eq:cfg}
    \tilde{\epsilon}_\theta(\xt|\yt) = \epsilon_\theta(\xt|\yt) + s(\epsilon_\theta(\xt|\yt) - \epsilon_\theta(\xt|\emptyset)),
\end{equation}
where $s$ is the guidance scale.
This is similar to a line of works~\cite{nichol2022glide,rombach2022high,wang2022semantic} adopting the guidance, but we use the perturbed labels $\yt$ by our discrete forward process instead of the clean and fixed label $\ys$ for all steps. 

\paragraph{Extrapolation.}
Inspired by \cite{lu2022dpm}, we give additional guidance in $\xs$ space, as opposed to directly sampling $\mathbf{x}_{t-1}$.
We first compute $\xs^{(t)}$ through $\tilde{\epsilon}_\theta(\xt|\yt)$.
This value is then extrapolated from the preceding time-step prediction, $\tilde{\mathbf x}_0^{(t+1)},$ using the formula:
\begin{equation}
\label{eq:extrapolation}
    \tilde{\mathbf x}_0^{(t)} = \xs^{(t)} + w \left (\xs^{(t)} - \tilde{\mathbf x}_0^{(t+1)} \right ),
\end{equation}
where the constant extrapolation scale is denoted by $w$.
Following extrapolation, we apply dynamic thresholding~\cite{saharia2022photorealistic} and subsequently randomly sample $\mathbf{x}_{t-1}$ utilizing $\xt$ and $\tilde{\mathbf x}_0^{(t)}$.

The complete training and sampling algorithm is in Appendix~\ref{sec:supp-B}.
Note that the overall algorithms remain mostly unchanged and the only modification involves incorporating $\texttt{[mask]}$ into labels through Label Diffusion.
This introduces minimal computational overhead.
\subsection{Discussion}
\label{sec:discussion}
We here theoretically analyze the generative processes of SCDM to discuss an interesting relationship with the \emph{fixed} conditional diffusion model (baseline).
SCDM approximates the following conditional score:
\begin{equation}
    \nabla_{\xt} \log q(\xt|\yt) = \nabla_{\xt} \log q(\xt) + \nabla_{\xt} \log q(\yt|\xt),
\end{equation}
while the baseline approximates the following:
\begin{equation}
    \nabla_{\xt} \log q(\xt|\ys) = \nabla_{\xt} \log q(\xt) + \nabla_{\xt} \log q(\ys|\xt).
\end{equation}
As the unconditional score $\nabla_{\xt} \log q(\xt)$ is identical for both models, 
we analyze the relationship between class guidance (gradients of implicit classifiers~\cite{ho2022classifier,dhariwal2021diffusion}), \ie, $\nabla_{\xt}\log q(\yt|\xt)$ and $\nabla_{\xt} \log q(\ys|\xt)$ in the following proposition.
\begin{proposition} 
\label{proposition2}
Suppose there exists a differentiable function $f_i$ such that $q(\ys^{i}|\xt) = \text{Cat}(\ys^{i};p=f_i(\xt))$  and $\ys^{1}|\xt,...,\ys^{H\times W}|\xt$ are independent, where $i \in \{1,...,H\times W\}$ denotes 
the index of a pixel in semantic map and $\ys^{i} \in \mathbb{R}^{C+1}$ is a one-hot vector. 
With $q(\yt^{i}|\ys^{i})$ from Eq.~\eqref{eq:marginal} and $\gamma_{t,c} = \gamma_t  \text{ for any } c$, 
we have the following relationship;
\begin{equation*}
    \EE_{q(\yt|\ys)}[\nabla_{\xt}\log q(\yt|\xt)] =(1-\gamma_t)\nabla_{\xt}\log q(\ys|\xt).
\end{equation*}
\end{proposition}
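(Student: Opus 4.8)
The plan is to exploit the conditional-independence structure of the forward process together with the fact that, under a class-independent schedule $\gamma_{t,c}=\gamma_t$, the label-diffusion kernel acts on each pixel as a two-outcome \emph{keep-or-mask} mixture. First I would invoke the pixelwise independence assumption $\ys^{1}|\xt,\dots,\ys^{H\times W}|\xt$ to factorize the log-likelihoods, giving $\nabla_{\xt}\log q(\yt|\xt)=\sum_i \nabla_{\xt}\log q(\yt^{i}|\xt)$ and likewise $\nabla_{\xt}\log q(\ys|\xt)=\sum_i \nabla_{\xt}\log q(\ys^{i}|\xt)$. It therefore suffices to establish the identity pixel by pixel and sum.

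For a fixed pixel $i$, I would compute the marginal $q(\yt^{i}|\xt)=\sum_{c} q(\yt^{i}|\ys^{i}=\mathbf{e}_c)\,q(\ys^{i}=\mathbf{e}_c|\xt)$. The crucial point is that label diffusion depends on $\ys^{i}$ alone and not on the image, so $\yt^{i}\perp\xt\mid\ys^{i}$ and the kernel $q(\yt^{i}|\ys^{i})$ is exactly the one in Eq.~\eqref{eq:marginal}. Using the assumed form $q(\ys^{i}|\xt)=\text{Cat}(\ys^{i};p=f_i(\xt))$, I would split on the observed value of $\yt^{i}$. If $\yt^{i}=\mathbf{e}_{C+1}$ (masked), then $q(\yt^{i}=\mathbf{e}_{C+1}|\ys^{i}=\mathbf{e}_c)=\gamma_t$ for every $c$, so the marginal collapses to $q(\yt^{i}=\mathbf{e}_{C+1}|\xt)=\gamma_t\sum_c q(\ys^{i}=\mathbf{e}_c|\xt)=\gamma_t$, a constant in $\xt$ whose gradient vanishes. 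If $\yt^{i}=\mathbf{e}_c$ for some $c\le C$ (unmasked), only the term $\ys^{i}=\mathbf{e}_c$ survives, giving $q(\yt^{i}=\mathbf{e}_c|\xt)=(1-\gamma_t)\,q(\ys^{i}=\mathbf{e}_c|\xt)$; taking logs, the constant factor $\log(1-\gamma_t)$ drops under $\nabla_{\xt}$, leaving exactly $\nabla_{\xt}\log q(\ys^{i}=\mathbf{e}_c|\xt)$.

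Finally I would take the expectation over the forward kernel $q(\yt^{i}|\ys^{i})$ with $\ys^{i}$ the clean one-hot label. The masked outcome occurs with probability $\gamma_t$ and contributes $0$, while the unmasked outcome occurs with probability $1-\gamma_t$ and contributes the clean-label score $\nabla_{\xt}\log q(\ys^{i}|\xt)$. Hence $\EE_{q(\yt^{i}|\ys^{i})}[\nabla_{\xt}\log q(\yt^{i}|\xt)]=(1-\gamma_t)\,\nabla_{\xt}\log q(\ys^{i}|\xt)$, and summing over pixels and re-applying independence to reassemble $\sum_i \nabla_{\xt}\log q(\ys^{i}|\xt)=\nabla_{\xt}\log q(\ys|\xt)$ gives the claim.

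The step I expect to be the main obstacle — really the only delicate one — is justifying the marginalization $q(\yt^{i}|\xt)=\sum_c q(\yt^{i}|\ys^{i}=\mathbf{e}_c)\,q(\ys^{i}=\mathbf{e}_c|\xt)$ through the conditional independence $\yt^{i}\perp\xt\mid\ys^{i}$, and then recognizing that the absorbing (mask) outcome is score-free because its marginal probability $\gamma_t$ carries no $\xt$-dependence. Once these two observations are in place, the remaining algebra is the routine case split sketched above.
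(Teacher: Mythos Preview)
Your argument is correct, and it reaches the result by a cleaner route than the paper's own proof. Both proofs begin with the same pixelwise reduction via the independence assumption, and both marginalize $q(\yt^{i}|\xt)$ through the intermediate $\ys^{i}$ using $\yt^{i}\perp\xt\mid\ys^{i}$. From there the paper proceeds algebraically: it writes $q(\yt^{i}|\xt)=\yt^{iT}\overline{\boldsymbol{Q}}_t f_i(\xt)$, differentiates, and evaluates the expectation as a sum $\sum_{\yt}$ that collapses to a product $\overline{\boldsymbol{Q}}_t^{T}\mathbf{D}\,\overline{\boldsymbol{Q}}_t\ys^{i}$ with a diagonal $\mathbf{D}$, then simplifies using $[f_i(\xt)]_{C+1}=0$ and $\mathbf{1}^{T}f_i(\xt)=1$. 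Your two-case split (masked vs.\ unmasked) exploits the specific binary structure of the absorbing kernel and bypasses all of that matrix bookkeeping; the key insight that the masked outcome is score-free because its marginal probability is the constant $\gamma_t$ is exactly what the paper recovers only at the very end of its computation. The paper's formulation would generalize more readily to other transition matrices $\boldsymbol{Q}_t$, whereas yours is tailored to the keep-or-mask case but is considerably more transparent for it.

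One small point worth making explicit: your claim that $q(\yt^{i}=\mathbf{e}_{C+1}\mid \ys^{i}=\mathbf{e}_c)=\gamma_t$ ``for every $c$'' holds only for $c\le C$; for $c=C+1$ the absorbing state stays with probability $1$. The conclusion $q(\yt^{i}=\mathbf{e}_{C+1}\mid\xt)=\gamma_t$ therefore tacitly uses $[f_i(\xt)]_{C+1}=0$, i.e., that the clean label is never the mask class. The paper states this explicitly; you should too.
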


The proof is available in Appendix~\ref{sec:supp-cfg-scale}.
Proposition \ref{proposition2} implies that the expectation of implicit classifier gradients in our method over all possible $\yt$ given $\ys$ is equivalent to $\nabla_{\xt}\log q(\ys|\xt)$ after time-dependent scaling.
First, the scaling factor $(1-\gamma_t)$ starts from 0 when $t=T$ and is set to 1 when $t=0$. 
In other words, our method acts like an unconditional generation at the beginning of the reverse process and gets stronger guidance as $t$ goes to 0.
Second, the expectation of the classifier gradient in our method is the same direction as the one in the baseline with fixed class labels. 
Note that this does \emph{not} mean that our method has the same guidance as the baseline with fixed labels and time-dependent scaling.
For more discussion, see Appendix~\ref{sec:supp-cfg-scale}.
\section{Experiments}
\label{sec:experiments}
\subsection{Noisy SIS benchmark} 
We evaluate our method based on ADE20K~\cite{zhou2017scene} dataset.
ADE20K contains 20K images for training and 2K images for test annotated with 151 classes including the `unlabeled' class.
Additionally, we introduce three new experimental setups to assess generation performance under noisy conditions using the ADE20K dataset as follows:

\textbf{[DS]} This setup employs \textit{downsampled} semantic maps that are resized by nearest-neighbor interpolation.
This setup simulates human errors such as jagged edges and coarse/low-resolution user inputs.
We downsample the semantic maps to 64$\times$64 and then upsample them to 256$\times$256.
Consequently, the label maps contain jagged edges.

\textbf{[Edge]} This setup masks the \textit{edges} of instances with an unlabeled class.
This setup imitates incomplete annotations around edges, especially between instances. 
We observed that human annotators occasionally leave the pixels on the boundaries of instances as `unlabeled' due to their inherent ambiguity, see Appendix~\ref{sec:supp-D} for examples.
Assuming a pixel with a different class compared to its neighbor is the edge of the instance, we detect edges using the label map.
Then, we fill the semantic map pixels with a distance of 2 or less from the edges with the unlabeled class.

\textbf{[Random]} This setup \textit{randomly} adds an unlabeled class to the semantic maps (10\%).
This setup mimics unintended user errors and extreme random noise.

More experimental results with other benchmark datasets (\eg, CelebAMask-HQ~\cite{lee2020maskgan} and COCO-Stuff~\cite{caesar2018coco}) are in Appendix~\ref{sec:supp-E}.
\subsection{Experimental Setup}
We adopt Fréchet Inception Distance (FID) \cite{heusel2017gans} to evaluate generation quality and mean Intersection-over-Union~(mIoU) to assess the alignment of the synthesis results with ground truth semantic maps.
We compare our model with GAN-based methods and DM-based methods.
All the baselines and ours are trained with clean benchmark datasets, and tested on the noisy SIS benchmark.
We present our results on noisy labels sampled over 25 steps in Section~\ref{sec:noisy_results}.
More information on baselines and implementation details are provided in Appendix~\ref{sec:supp-baselines} and \ref{sec:supp-implementation}, respectively.
\subsection{SIS with Noisy Labels} 
\label{sec:noisy_results}
\begin{table}[t]
\caption{
    \textbf{SIS with noisy labels.} F is FID and lower is better. M is mIoU and higher is better. The bottom three rows are DM-based, and the others are GAN-based methods. The best results among the diffusion-based approaches are boldfaced, and the best results overall are underlined.}
    \label{tab:sis-noisy}   
    \vskip 0.15in
    \centering
    \setlength{\tabcolsep}{3.5pt}
    \resizebox{\columnwidth}{!}{
    \begin{tabular}{c|l|c c| c c| c c}
        \toprule
        \multicolumn{1}{c|}{\multirow{2}{*}{\textbf{Category}}} &
        \multicolumn{1}{c|}{\multirow{2}{*}{\textbf{Methods}}}& \multicolumn{2}{c|}{\textbf{DS}} & \multicolumn{2}{c|}{\textbf{Edge}} &  \multicolumn{2}{c}{\textbf{Random}} \\
           & & \textbf{F}($\downarrow$) & \textbf{M}($\uparrow$) &  \textbf{F}($\downarrow$) & \textbf{M}($\uparrow$)&  \textbf{F}($\downarrow$) & \textbf{M}($\uparrow$) \\
        \midrule
        \midrule
        \multirow{8}{*}{GAN} &
        SPADE   &36.6 & 41.0 & 40.2  & 39.7 & 92.7 & 18.5   \\
        &CC-FPSE   &  42.0  & 40.4 & 38.7 & 41.7 & 141.3 & 16.2  \\
        &DAGAN & 37.6 & 41.3 & 42.2 & 39.0 & 109.1 & 16.4  \\
        &GroupDNet & 45.9  & 28.9 & 49.9 & 28.0 & 76.0 & 21.2  \\
        &OASIS  & 34.5  & 48.1 & {37.6} & \underline{47.4} & {54.2} & {42.1}  \\
        &CLADE & 37.4 & 41.8 & 41.9  & 40.1 & 71.8 & 29.9  \\
        &INADE  & {36.1} & 38.3 & 40.3 & 35.6 & 61.3 & 30.2  \\
        &SCGAN   &  38.1  & 44.0 & 63.9  & 43.3 & 40.9 & 38.6  \\ 
        &SAFM*   & 36.7  & \underline{50.1} & 40.0 & 44.9 & 80.6 & 34.1   \\
        \midrule
        \multirow{3}{*}{DM}&
        SDM  & 35.5 & 43.8 & 39.4 & 39.4 & 141.9 & 11.8  \\
        &LDM & 38.9 & 28.1 & 39.5 & 26.0 & 36.3 & 27.1  \\
        &\cellcolor[gray]{0.9}\textbf{Ours} &  \cellcolor[gray]{0.9}\underline{\textbf{32.4}} & \cellcolor[gray]{0.9}\textbf{44.7} & \cellcolor[gray]{0.9}\underline{\textbf{31.2}} & \cellcolor[gray]{0.9}\textbf{40.1} & \cellcolor[gray]{0.9}\underline{\textbf{28.1}}  & \cellcolor[gray]{0.9}\underline{\textbf{45.2}} \\
        \bottomrule
    \end{tabular}}
\vspace{-5mm}
\end{table}
\begin{figure}[t]
\centering
\subfigure[Masks with jagged edges (DS)]{\includegraphics[width=\columnwidth]{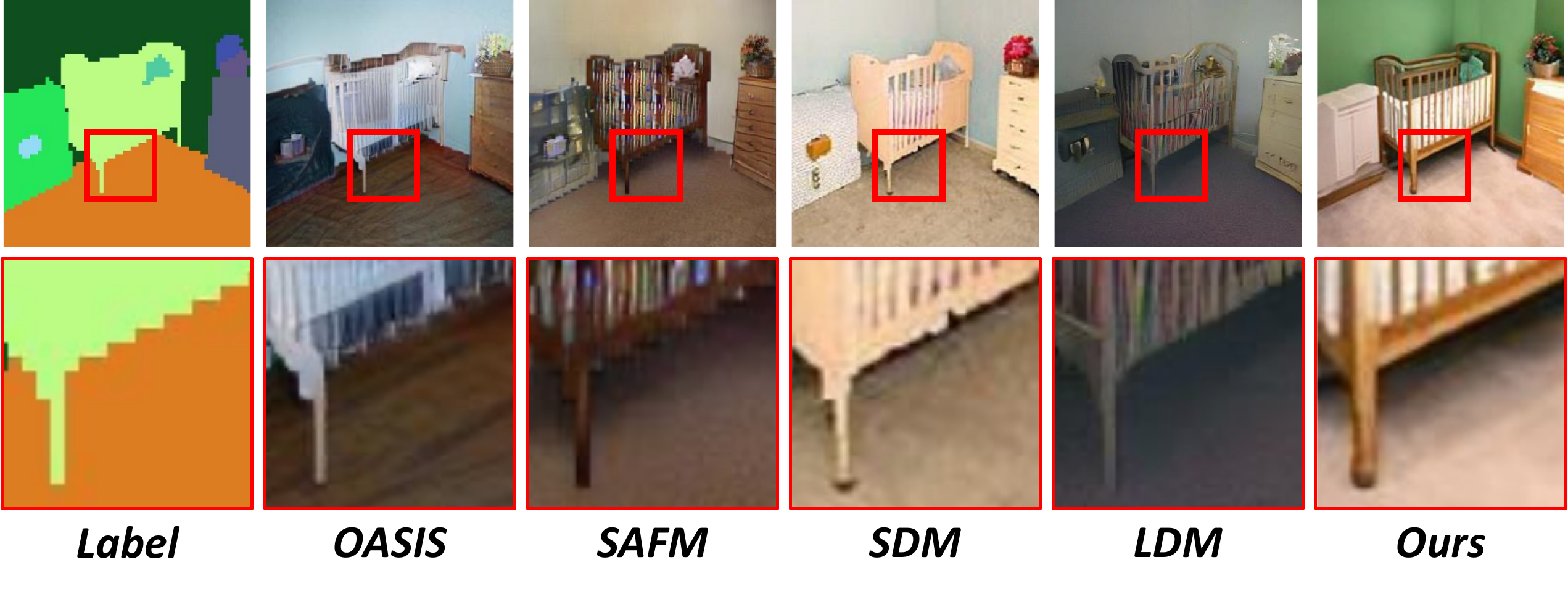}
\label{fig:noisy-low}}
\subfigure[Incomplete masks (Edge), \textcolor{green}{limegreen} areas denote `unlabeled'.]{\includegraphics[width=\columnwidth]{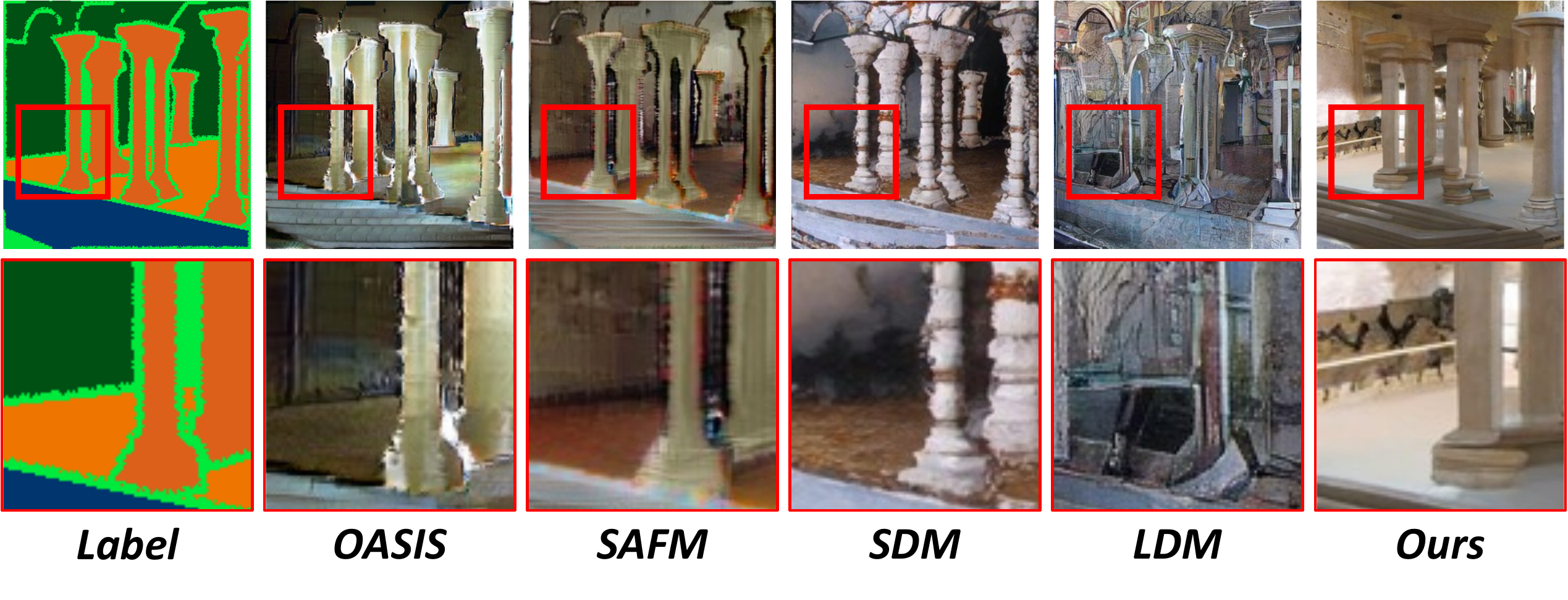}
\label{fig:noisy-edge}}
\subfigure[Corrupted masks (Random)]{\includegraphics[width=\columnwidth]{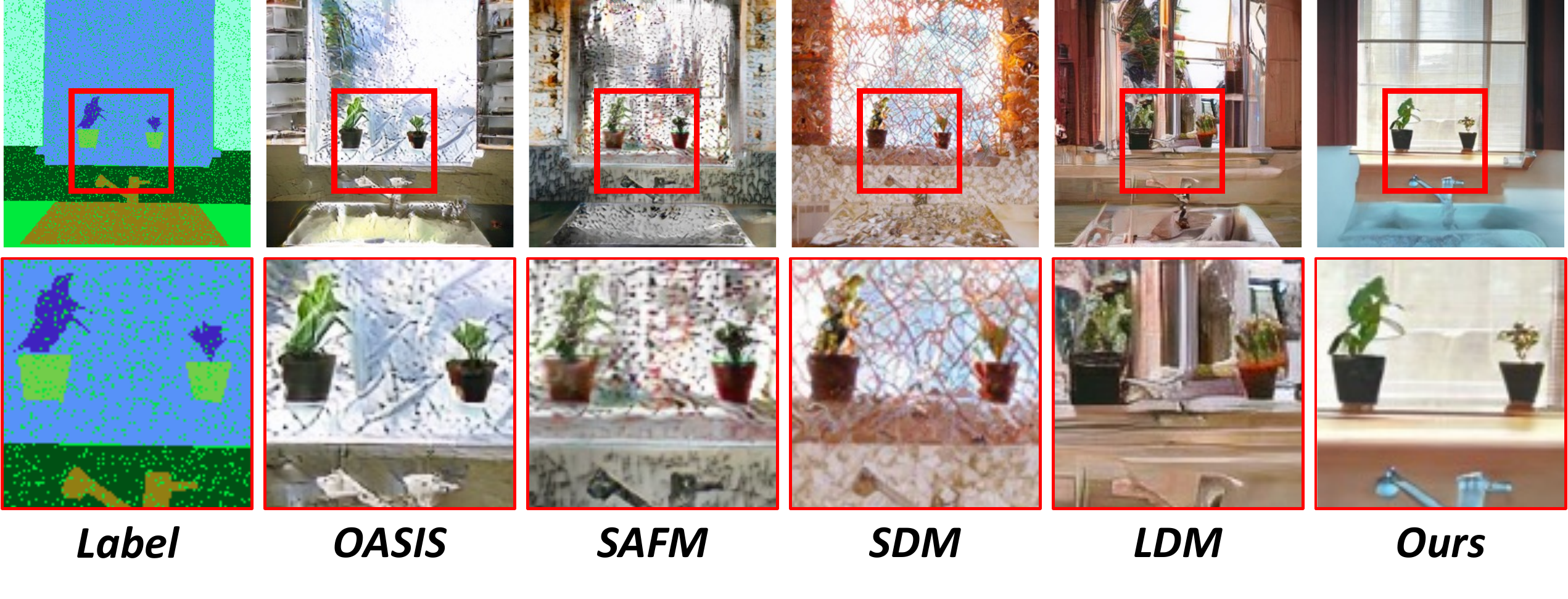}
\label{fig:noisy-rand}}
\vspace{-3mm}
\caption{\textbf{Generation results on noisy labels.}
}
\vspace{-5mm}
\label{fig:noisy-qual}
\end{figure}
Table~\ref{tab:sis-noisy} shows the performances of our method and baselines under the three settings.
SCDM demonstrates its superior robustness to all three types of noise in generation quality measured by FID, compared to other baselines.
Notably, in the three setups, the performance gaps between the best baseline scores and our method are +2.1, +6.4, and +8.2, respectively.
We also evaluate the semantic correspondence between the clean ground-truth label maps and the generation results and report mIoU scores.
Our method achieved the best mIoU performance among DM-based models in all three settings.
Compared to strong GAN-based baselines, including SAFM~\cite{lv2022semantic} 
(denoted with `*' in Table~\ref{tab:sis-noisy}) 
that leverages extra ground truth instance maps during sampling, our results show comparable results in correspondence.

We present the qualitative comparisons in Figure~\ref{fig:noisy-qual}.
While the baselines synthesized the jagged or unnatural crib images given the low-resolution semantic label, ours produced clean edges and generated a realistic image, as shown in Figure~\ref{fig:noisy-low}.
In Figure~\ref{fig:noisy-edge}, our approach naturally fills in the unlabeled edge areas, whereas the baselines fail to generate realistic images, especially on the `unlabeled' edges.
As shown in Figure~\ref{fig:noisy-rand}, ours successfully generates when conditioned on randomly corrupted masks, while others fail and synthesize artifacts.
\subsection{SIS with Clean Labels (Standard SIS)}
\label{sec:standard-sis}
\begin{table*}[t]
    \caption{\textbf{Quantitative performance comparison on generation quality.} 
    The baseline methods are categorized into Generative Adversarial Networks~(GAN) and Diffusion Models~(DM). 
    For FID, lower is better. 
    For LPIPS and mIoU, higher is better. 
    The best results among the diffusion-based approaches are boldfaced, and the best results overall are underlined.
    `-' indicates that the method did not report the metric or train the dataset, or the checkpoint or samples are not publicly available.
    `Seg' denotes that the method leverages a pre-trained segmentation network during training.
    `†' denotes a one-shot method.}
    \label{tab:sis-main}
    \vskip 0.15in
    \centering
    \setlength{\tabcolsep}{3.5pt}
    \resizebox{\textwidth}{!}{
    \begin{tabular}{c|l|c|c c c|c c c|c c c}
        \toprule
        \multicolumn{1}{c|}{\multirow{2}{*}{\textbf{}}} & \multicolumn{1}{c|}{\multirow{2}{*}{\textbf{Methods}}} & \multicolumn{1}{c|}{\multirow{2}{*}{\textbf{Seg}}} & \multicolumn{3}{c|}{\textbf{CelebAMask-HQ}} & \multicolumn{3}{c|}{\textbf{ADE20K}} &  \multicolumn{3}{c}{\textbf{COCO-Stuff}} \\
          & & & \textbf{FID}($\downarrow$) & \textbf{LPIPS}($\uparrow$)& \textbf{mIoU}($\uparrow$) &  \textbf{FID}($\downarrow$) & \textbf{LPIPS}($\uparrow$) & \textbf{mIoU}($\uparrow$)&  \textbf{FID}($\downarrow$) & \textbf{LPIPS}($\uparrow$) & \textbf{mIoU}($\uparrow$) \\
        \midrule
        \midrule
        \multirow{14}{*}{GAN} 
        & RESAIL~\cite{shi2022retrieval}    & \cmark  &  -  & -    & -    & 30.2 &  -    & 49.3* & 18.3  & - & 44.7   \\
        & SAFM~\cite{lv2022semantic}      & \cmark  &  -  & -    & -    & 32.8 &  -    & 52.6 & 24.6  & - & 43.3   \\
        & ECGAN~\cite{tang2023edge}      & \cmark  &  -  & -    & -    & 25.8 &  0.52    & 50.6* & 15.7  & - & 46.3 \\
        & ECGAN++~\cite{tang2023edge2}      & \cmark  &  -  & -    & -    & \underline{24.7} &  \underline{0.54}    & \underline{52.7}* & \underline{14.9}  & - & \underline{47.9} \\
        \cmidrule{2-12}
        & pix2pixHD~\cite{wang2018high} & \xmark & 38.5 & 0    & 76.1 & 81.8 & 0     & 20.3*  & 111.5 & 0 & 14.6   \\
        & SPADE~\cite{park2019semantic}     & \xmark &  29.2 & 0    & 75.2 & 33.9 & 0     & 44.5 & 33.9  & 0 & 36.9   \\
        & CC-FPSE~\cite{liu2019learning}   & \xmark &  -    & -    & -    & 31.7 & 0.078 & 47.3 & 19.2  & 0.098 & 40.8   \\
        & DAGAN~\cite{tang2020dual}     & \xmark & 29.1 & 0    & 76.6 & 31.9 & 0     & 45.5 & -     & - & -   \\
        & GroupDNet~\cite{zhu2020semantically} & \xmark & 25.9 & 0.365& 76.1 & 41.7 & 0.230 & 33.7 & -   & - & -   \\
        & OASIS~\cite{sushko2020you}     & \xmark &   -  & -    & -    & 28.3 & 0.286 & 50.9 & 17.0  & 0.328 & 44.2   \\ 
        & INADE~\cite{tan2021diverse}     &\xmark & 21.5 & 0.415& 74.1 & 35.2 & 0.459 & 41.4 & -   & - & -   \\
        & SCGAN~\cite{wang2021image}     & \xmark & 20.8 & 0    & 75.5 & 29.3 & 0     & 50.0 & 18.1  & 0 & 41.7   \\
        & CLADE~\cite{tan2021efficient}     & \xmark & 30.6 & 0    & 75.4 & 35.4 & 0     & 44.7 & 29.2  & 0 & 36.9   \\
        & Unconditional Generator†~\cite{chae2024semantic}      & \xmark  &  18.5  & -    & 53.1    & - &  -    & - & -  & - & - \\
        \midrule
        \multirow{4}{*}{DM}& SDM~\cite{wang2022semantic}       & \xmark & 18.8 & {0.404} & 77.0 & 27.5 &0.524 & 48.7 & 15.9  & 0.518 & 34.9   \\
        & LDM~\cite{rombach2022high}  & \xmark & 21.5 & 0.315 & 74.6 & 36.5 & 0.417 & 23.2 & - & - & - \\
        & PITI~\cite{wang2022pretraining}  & \xmark &  - & - & - & 27.3 & - & - & 15.8 & 0.489 & 32.2 \\
         & \cellcolor[gray]{0.9}\textbf{Ours} &  \cellcolor[gray]{0.9}{\xmark} 
         & \cellcolor[gray]{0.9}{\underline{\textbf{17.4}}} & \cellcolor[gray]{0.9}\underline{\textbf{0.418}} & \cellcolor[gray]{0.9}\underline{\textbf{77.2}} & \cellcolor[gray]{0.9}{\textbf{26.9}} & \cellcolor[gray]{0.9}{\textbf{0.530}} & \cellcolor[gray]{0.9}\textbf{49.4} & \cellcolor[gray]{0.9}\textbf{15.3} & \cellcolor[gray]{0.9}{\underline{\textbf{0.519}}} & \cellcolor[gray]{0.9}\textbf{38.1} \\
        \bottomrule
    \end{tabular}}
\end{table*}
We also evaluate our method in a standard SIS setting with clean labels.
In this experiment, we additionally adopt LPIPS~\cite{zhang2018unreasonable} as a diversity metric.
Table~\ref{tab:sis-main} shows that SCDM achieves the best performance in all three metrics (\textit{e.g.}, FID, LPIPS, mIoU) compared to recent DM-based baselines in all datasets.
Also, including GAN-based models, the proposed method shows comparable performances.
Specifically, our method surpasses all baselines on CelebAMask-HQ in all three metrics with a significant gain of +1.1~(FID) compared to the state-of-the-art method.
Qualitative comparisons and more detailed analysis are in Appendix~\ref{sec:exp_results}.
\section{Analysis}
In this section, we analyze our method to understand (1) the efficacy of Label diffusion and extrapolation
and (2) the effect of the class-wise noise schedule.
\subsection{Ablation Study} 
\label{sec:ablation}
\begin{table*}[t]
    \caption{\textbf{Ablation study on \textit{ADE20K}.} For FID~(F), lower is better. For LPIPS~(L) and mIoU~(M), higher is better.}
    \label{tab:ablation}
    \begin{center}
    \setlength{\tabcolsep}{3pt}
    \resizebox{\textwidth}{!}{
    \begin{tabular}{c|l| c| c | c | c | c | c | c | c | c | c | c | c | c | c | c}
        \toprule
        & \multirow{2}{*}{\textbf{Method}} & \multicolumn{3}{c|}{\textbf{25 steps}} & \multicolumn{3}{c|}{\textbf{50 steps}} & \multicolumn{3}{c|}{\textbf{100 steps}} & \multicolumn{3}{c|}{\textbf{250 steps}} & \multicolumn{3}{c}{\textbf{1000 steps}}\\
         & 
& \textbf{F}($\downarrow$) & \textbf{L}($\uparrow$) & \textbf{M}($\uparrow$)& \textbf{F}($\downarrow$) & \textbf{L}($\uparrow$) & \textbf{M}($\uparrow$)& \textbf{F}($\downarrow$) & \textbf{L}($\uparrow$) & \textbf{M}($\uparrow$)& \textbf{F}($\downarrow$) & \textbf{L}($\uparrow$) & \textbf{M}($\uparrow$)& \textbf{F}($\downarrow$) & \textbf{L}($\uparrow$) & \textbf{M}($\uparrow$)\\
        \midrule
        \midrule
        (a) & Base 
        & 44.6 & 0.471 & 33.8 & 35.8 & 0.489 & 47.1 & 31.9 & 0.500 & 48.2 & 29.3 & 0.506 & 48.6 & 28.1 & 0.508 & 48.6 \\
        (b) & + Label Diffusion
        & 39.5 & 0.492 & 45.5 & 33.6 & 0.513 & 47.3 & 29.8 & \textbf{0.522} & 48.6 & 27.7 & 0.528 & 48.7 & 26.9 & 0.530 & 48.8 \\
        (c) & + Extrapolation
        & \textbf{27.7} & \textbf{0.518} & \textbf{48.7} & \textbf{27.0} & \textbf{0.525} & \textbf{48.7} & \textbf{26.7} & \textbf{0.522} & \textbf{49.8} & \textbf{26.7} & \textbf{0.530} & \textbf{49.6} & \textbf{26.8} & \textbf{0.531} & \textbf{49.9} \\
        \bottomrule
    \end{tabular}
    }
    \end{center}
\end{table*}
\begin{table}[t]
\caption{\textbf{Ablation study on noisy SIS.} Generation results with and without Label Diffusion are compared, where samples generated from each noisy dataset are compared with those from the clean dataset.
For LPIPS and FID, lower is better. For SSIM and PSNR, higher is better.}
    \label{tab:noisy-ablation} 
    \begin{center}
    \setlength{\tabcolsep}{3pt}
    \resizebox{\columnwidth}{!}{
    \begin{tabular}{c|c|c|c|c|c}
        \toprule
        \multirow{2}{*}{\textbf{Dataset}} & \multirow{2}{*}{\textbf{Method}} & \multicolumn{4}{c}{\textbf{Metric}} \\
        & & LPIPS($\downarrow$) & SSIM($\uparrow$) & PSNR($\uparrow$) & FID($\downarrow$) \\
        \midrule
        \midrule
        \multirow{2}{*}{\textbf{DS}} & Baseline & 0.221 & 0.823 & 30.4 & 24.2\\
        &\cellcolor[gray]{0.9}Ours & \cellcolor[gray]{0.9}\textbf{0.180} & \cellcolor[gray]{0.9}\textbf{0.865} & \cellcolor[gray]{0.9}\textbf{31.3} & \cellcolor[gray]{0.9}\textbf{19.0}\\
        \midrule
        \multirow{2}{*}{\textbf{Edge}} & Baseline & 0.248 & 0.771 & 29.7 & 32.7 \\
        &\cellcolor[gray]{0.9}Ours& \cellcolor[gray]{0.9}\textbf{0.223}& \cellcolor[gray]{0.9}\textbf{0.825}& \cellcolor[gray]{0.9}\textbf{30.2}& \cellcolor[gray]{0.9}\textbf{20.0}\\
        \midrule
        \multirow{2}{*}{\textbf{Random}} & Baseline & 0.560 & 0.427 & 28.1 & 145.6 \\
        &\cellcolor[gray]{0.9}Ours& \cellcolor[gray]{0.9}\textbf{0.076}& \cellcolor[gray]{0.9}\textbf{0.944}& \cellcolor[gray]{0.9}\textbf{32.9}& \cellcolor[gray]{0.9}\textbf{10.0}\\
        \bottomrule
    \end{tabular}
    }
    \end{center}
\end{table}
\begin{figure}[t!]
\centering
\includegraphics[width=1.0\columnwidth]{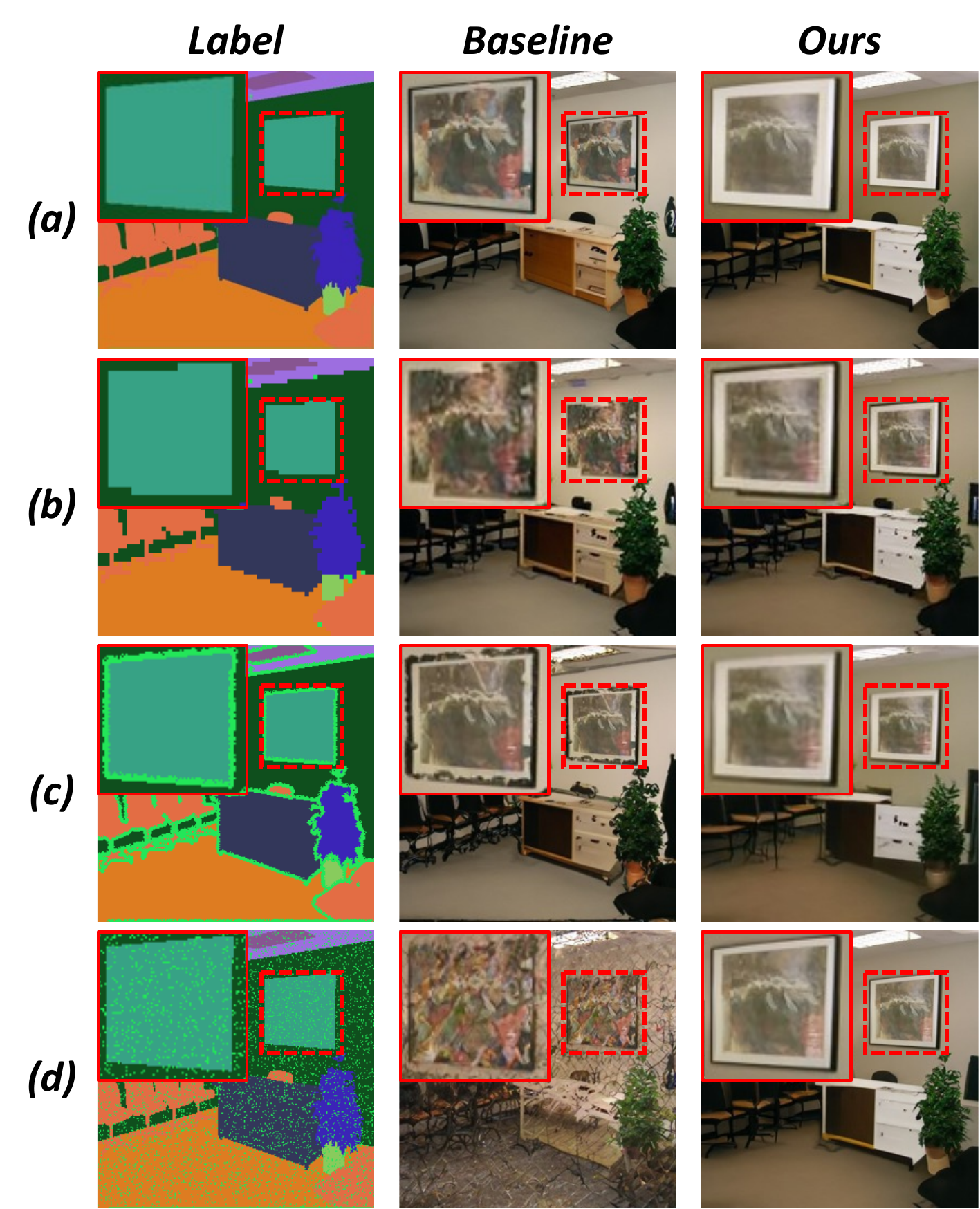}
\caption{\textbf{Generation results with and without Label Diffusion.} The results are sampled with the fixed random seeds and the same $\xb_T$, and generated with (a) clean labels, (b) DS, (c) Edge, and (d) Random setup noisy labels, respectively.
}
\label{fig:noisy-ablation}
\end{figure}
To further demonstrate SCDM's enhanced robustness against noisy labels, we first show the effect of our components in the original (clean) benchmark.
Subsequently, we show that our samples generated with noisy labels closely resemble the results obtained with clean labels, supporting our motivation in Figure~\ref{fig:motivation2}.

\subsubsection{Does SCDM successfully estimate $q(X|Y)$?}
\label{sec:abl1}
The effect of two components of SCDM (Label Diffusion and extrapolation) is analyzed by an ablation study in Table~\ref{tab:ablation}.
In this analysis, we adopt LPIPS~\cite{zhang2018unreasonable} to additionally compare the generation diversity and measure the average distance between multi-modal synthesis results.
We compare the following;
(a) Base generates images conditioned on original fixed semantic maps $\ys$, 
(b) +Label Diffusion generates images conditioned on perturbed labels by our Label Diffusion, and 
(c) +Extrapolation uses Eq.~(\ref{eq:extrapolation}) instead of direct sampling of $\mathbf{x}_{t-1}$ on top of (b).
The results are reported for the few-step (25, 50, 100, and 250 steps) and the full-step (1000 steps) settings.

By eliminating all of our components, (c) $\rightarrow$ (a), the performance of all three metrics significantly declined for all sampling steps.
This empirically shows that our SCDM successfully estimates the conditional distribution $q(X|Y)$.
The degradation became more substantial when omitting extrapolation, (b) $\rightarrow$ (a), in fewer steps, highlighting its significance in the few-step generation. 
Surprisingly, full SCDM (c) with only 25 steps (FID of 27.7) outperforms the baseline (a) with 1000 steps (FID of 28.1).
This supports the effectiveness of the proposed method.

\subsubsection{Does Label Diffusion contribute to robustness?}
\label{sec:abl2}
Furthermore, we examine the effect of Label Diffusion in the noisy SIS setting to verify the robustness of our method, as depicted in Figure~\ref{fig:motivation2}.
Specifically, we demonstrate that the generation results given clean and noisy labels are similar.
To rigorously analyze the effects, we fix the random seeds and use the same $\xb_T$.
Then we compare the generation results of our method and the baseline, conditioned on clean (original dataset) semantic maps with results of noisy (DS, Edge, and Random) semantic maps, respectively.
For a quantitative comparison, we adopt the following metrics: LPIPS (perceptual similarity), SSIM (structural similarity), PSNR (peak signal-to-noise ratio), and FID, and details are in Appendix~\ref{sec:supp-metrics}.
LPIPS, SSIM, and PSNR are calculated \textit{sample}-wise, comparing each pair of generated samples of the clean and noisy semantic maps, while FID compares the distribution of the generated \textit{set} of images.

Results in Table~\ref{tab:noisy-ablation} indicate that Label Diffusion significantly contributes to robust generation, with our samples exhibiting better similarity in all four metrics.
Particularly, ours resulted in +0.041, +0.025, and +0.484 of LPIPS gain over baseline in DS, Edge, and Random settings, respectively.
Figure~\ref{fig:noisy-ablation} presents a qualitative comparison supporting our intuition behind SCDM.
Without Label Diffusion (Baseline in Figure~\ref{fig:noisy-ablation}), the generated results show inconsistency, whereas samples conform to the result of clean labels when Label Diffusion is employed (Ours in Figure~\ref{fig:noisy-ablation}).
\subsection{Effect of Class-wise Noise Schedule}
\label{sec:noise_schedule_analysis}
\begin{table}[t]
    \caption{\textbf{mIoU per each group on \textit{ADE20K}.}
    The classes are grouped based on their $\psi_c\phi_c$ scores.}
    \label{tab:group-iou}
    \begin{center}
    \setlength{\tabcolsep}{3pt}
    \resizebox{\columnwidth}{!}{
    \begin{tabular}{l| c | c | c | c}
        \toprule
        \textbf{Noise Schedule} & \textbf{All} & \textbf{Frequent} & \textbf{Common} & \textbf{Rare} \\
        \midrule
        \midrule
        \textbf{Linear \& uniform} & 43.0 & 56.1 & 40.8 & 32.2 \\
        \midrule
        \textbf{Class-wise} & \textbf{49.4} (+6.4) & \textbf{60.3} (+4.2) & \textbf{47.7} (+6.9) & \textbf{38.4} (+8.1) \\
        \bottomrule
    \end{tabular}
    }
    \end{center}
    \vspace{-5mm}
\end{table}
In this section, we elucidate the effect of our class-wise noise schedule by comparing the generation results of two different models using the class-wise schedule and linear and uniform schedule, \ie, $\eta \rightarrow 0$.
We observe that the class-wise schedule clearly improves the image quality, especially in terms of semantic correspondence (mIoU~($\uparrow$) of \textbf{49.4} (class-wise) $>$ 43.0 (uniform) on ADE20K).
The quantitative and qualitative results are presented in the Appendix~\ref{sec:supp-classwise}.
Furthermore, the class-wise schedule exhibits superiority in small and rare class synthesis.
By organizing the classes into three groups - frequent, common, and rare - based on their $\psi_c \phi_c$ scores, we compare mIoU per each group on ADE20K and report the performance in Table~\ref{tab:group-iou}.
Notably, our class-wise schedule exhibited mIoU gain in all the groups, with the highest performance improvement of +8.1 in the rare group.
\section{Conclusion}
This paper introduces SCDM, a novel and robust conditional diffusion model for semantic image synthesis.
The discrete diffusion for labels, which we name Label Diffusion, is designed with label masking.
Label Diffusion ensures that the intermediate labels along the generation process become similar and eventually identical at $t=T$.
Additionally, the class-wise noise schedule improves the generation quality for small and rare objects.
We define the generation process with a discrete forward process of labels and a continuous reverse process of images, as the labels are given in SIS.
SCDM demonstrates its robustness in noisy SIS setups which we designed to simulate human errors in real-world applications.
\section*{Acknowledgements}{
This work was partly supported by ICT Creative Consilience Program through the Institute of Information \& Communications Technology Planning \& Evaluation (IITP) grant funded by the Korea government (MSIT)(IITP-2024-2020-0-01819, 10\%), the National Research Foundation of Korea (NRF) grant funded by the Korea government (MSIT)(NRF-2023R1A2C2005373, 30\%), the Virtual Engineering Platform Project (Grant No. P0022336, 30\%), funded by the Ministry of Trade, Industry \& Energy (MoTIE, South Korea), and the National Supercomputing Center with supercomputing resources including technical support (KSC-2022-CRE-0261, 30\%).
}
\section*{Impact Statement}{
This paper focuses on robust generation against noisy user input, a crucial aspect in real-world applications.
Since clean labels and benchmark datasets inherently contain biases, our model might reinforce these biases when its samples are widely used across the Internet.
Nevertheless, our research opens new avenues for future work in image generation for real-world problems, potentially expanding the application of diffusion models in practical scenarios.
}
\bibliography{main}
\bibliographystyle{icml2024}

\newpage
\appendix
\onecolumn
The appendix is organized into the following sections.
\begin{itemize}
    \item Appendix~\ref{sec:supp-A}: Derivations
        \begin{itemize}
            \item \ref{sec:supp-vlb-proof} Variational Lower Bound for Stochastic Conditional Diffusion Model
            \item \ref{sec:supp-lemma1-proof} Proof of Proposition~\ref{lemma1}
            \item \ref{sec:supp-prop2-proof} Proof of Proposition~\ref{proposition2}
        \end{itemize}
    \item Appendix~\ref{sec:supp-B}: Algorithms
    \item Appendix~\ref{sec:supp-C}: Experimental Setup
        \begin{itemize}
            \item \ref{sec:supp-metrics} Metrics
            \item \ref{sec:supp-baselines} Baselines
            \item \ref{sec:supp-implementation} Implementation Details
            \item \ref{sec:supp-trajectory} Efficient Trajectory Representation
        \end{itemize}
    \item Appendix~\ref{sec:supp-psiphi}: Detailed Explanations on $\psi_c$ and $\phi_c$
    \item Appendix~\ref{sec:supp-D}: ADE20K Dataset Annotation Examples
    \item Appendix~\ref{sec:limitations}: Limitations
    \item Appendix~\ref{sec:supp-E}: Additional Experimental Results
        \begin{itemize}
            \item \ref{sec:exp_results} Standard SIS Setting
            \item \ref{sec:supp-extr-w} Extrapolation Hyperparameter Search
            \item \ref{sec:supp-vis} Visualization of Label Diffusion
            \item \ref{sec:supp-classwise} Effect of Class-wise Noise Schedule
            \item \ref{sec:supp-cfg-scale} Further Discussion and Analysis on Class Guidance
            \item \ref{sec:supp-validation-gen} Validation of SCDM Generation Process
            \item \ref{sec:supp-multimodal} More Qualitative Results - Multimodal Generation
            \item \ref{sec:supp-more-noisy} More Qualitative Results - SIS with Noisy Labels
            \item \ref{sec:supp-more-standard} More Qualitative Results - Standard SIS Setting
        \end{itemize}
\end{itemize}
\section{Derivations}
\label{sec:supp-A}
\subsection{Variational Lower Bound for Stochastic Conditional Diffusion Model}
\label{sec:supp-vlb-proof}
In this section, we provide a detailed derivation of the objective function~(Eq. \eqref{eq:loss}) discussed in Section 4.3 of the main paper.
We start by defining our Stochastic Conditional Diffusion Model as:
\begin{equation}
    \ptheta(\xs|\ys) := \int\int\ptheta(\xsT,\yfT|\ys)\mathrm{d\xb}_{1:T}\mathrm{d\yb}_{1:T},
\end{equation}
where the generation process is defined as follows:
\begin{align}
    \ptheta(\xsT,\yfT|\ys) &:= p(\xT)q(\yfT|\ys)\prod^T_{t=1}\ptheta(\xtt|\xt, \yt), \\
    \ptheta(\xtt|\xt,\yt) &:= \N(\xtt; \mathbf{\mu}_\theta(\xt,\yt,t), \mathbf{\Sigma}_\theta(\xt,\yt,t)).
\end{align}
The diffusion forward process in which SCDM approximates is:
\begin{align}
    q(\xfT,\yfT|\xs,\ys) &:= \prod^T_{t=1}q(\xt,\yt|\xtt, \ytt), \\
    q(\xt,\yt|\xtt,\ytt) &:= q(\xt|\xtt)q(\yt|\ytt).
\end{align}
Variational bound on negative log-likelihood of SCDM can be derived as follows:

\begin{align}
    & \EE_{q(\xfT,\yfT|\xs,\ys)} \ll-\log p_\theta(\xs|\ys) \rr  \\
    &= \EE_q \ll-\log \frac{p_\theta(\xsT,\yfT|\ys)}{p_\theta(\xfT,\yfT|\ys)} \rr \\
    & = \EE_q \ll-\log \frac{p_\theta(\xsT,\yfT|\ys)}{p_\theta(\xfT,\yfT|\ys)}\frac{q(\xfT,\yfT|\xs,\ys)}{q(\xfT,\yfT|\xs,\ys)} \rr \\
    & = \EE_q \ll-\log \frac{p_\theta(\xsT,\yfT|\ys)}{q(\xfT,\yfT|\xs,\ys)}\frac{q(\xfT,\yfT|\xs,\ys)}{p_\theta(\xfT,\yfT|\ys)} \rr \\
    &= \EE_q \ll-\log \frac{p_\theta(\xsT,\yfT|\ys)}{q(\xfT,\yfT|\xs,\ys)}\rr + \EE_q \ll-\log \frac{q(\xfT,\yfT|\xs,\ys)}{p_\theta(\xfT,\yfT|\ys)} \rr \\
    &= \EE_q \ll-\log \frac{p_\theta(\xsT,\yfT|\ys)}{q(\xfT,\yfT|\xs,\ys)}\rr - \underbrace{D_\text{KL}\left( {q(\xfT,\yfT|\xs,\ys)}||{p_\theta(\xfT,\yfT|\ys)} \right)}_{D_\text{KL}\ge 0} \\
    &\le \EE_q \ll-\log \frac{p_\theta(\xsT,\yfT|\ys)}{q(\xfT,\yfT|\xs,\ys)}\rr,
\end{align}

\begin{align}
    &\EE_q \ll-\log \frac{p_\theta(\xsT,\yfT|\ys)}{q(\xfT,\yfT|\xs,\ys)}\rr  \\
    &= \EE_q \ll-\log \frac{p(\xT)q(\yfT|\ys)\prod^T_{t=1}\ptheta(\xtt|\xt, \yt)}{\prod^T_{t=1}q(\xt,\yt|\xtt, \ytt)}\rr \\
    & = \EE_q \ll-\log \frac{p(\xT)q(\yfT|\ys)\prod^T_{t=1}\ptheta(\xtt|\xt, \yt)}{\prod^T_{t=1}q(\xt|\xtt)q(\yt|\ytt)}\rr \\
    & = \EE_q \ll-\log \frac{p(\xT)q(\yfT|\ys)\prod^T_{t=1}\ptheta(\xtt|\xt, \yt)}{\prod^T_{t=1}q(\xt|\xtt)\prod^T_{t=1}q(\yt|\ytt)}\rr \\
    & = \EE_q \ll-\log \frac{p(\xT)\prod^T_{t=1}\ptheta(\xtt|\xt, \yt)}{\prod^T_{t=1}q(\xt|\xtt)}\rr \\
    & = \EE_q \ll-\log p(\xT) -\sum^T_{t=1}\log \frac{\ptheta(\xtt|\xt, \yt)}{q(\xt|\xtt)}\rr =: L
\end{align}

Also, we can derive the reduced variance variational bound for SCDM as follows:
\begin{align}
    L & = \EE_q \ll-\log p(\xT) -\sum^T_{t=1}\log \frac{\ptheta(\xtt|\xt, \yt)}{q(\xt|\xtt)}\rr \\
    & = \EE_q \left[-\log p(\xT) -\sum^T_{t=2}\log \frac{\ptheta(\xtt|\xt, \yt)}{q(\xt|\xtt)} -\log\frac{\ptheta(\xs|\xf, \yf)}{q(\xf|\xs)} \right]\\
    & = \EE_q \left[-\log p(\xT) -\sum^T_{t=2}\log \frac{\ptheta(\xtt|\xt, \yt)}{q(\xtt|\xt,\xs)}\frac{q(\xtt|\xs)}{q(\xt|\xs)} -\log\frac{\ptheta(\xs|\xf, \yf)}{q(\xf|\xs)} \right] \\
    & = \EE_q \left[ -\log\frac{p(\xT)} {q(\xT|\xs)} -\sum^T_{t=2}\log \frac{\ptheta(\xtt|\xt, \yt)}{q(\xtt|\xt,\xs)} -\log \ptheta(\xs|\xf, \yf)\right] \\
    & = \EE_q \left[ D_\text{KL}({q(\xT|\xs)}||{p(\xT)}) + \sum^T_{t=2}D_\text{KL}({q(\xtt|\xt,\xs)}||{\ptheta(\xtt|\xt, \yt)}) -\log \ptheta(\xs|\xf, \yf) \right].
\end{align}

The only difference between the typical bound of a conditional DM and our bound is the substitution of $\ys$ for $\yt$ in $\ptheta(\xtt|\xt, \yt)$.
Therefore, we can use the DDPM simple loss~ ($\mathcal{L}_\text{simple}$)~\cite{ho2020denoising} or the hybrid loss~($\mathcal{L}_\text{simple} + \lambda \mathcal{L}_
\text{vlb}$)~\cite{nichol2021improved} for training our Stochastic Conditional Diffusion Model. 
\subsection{Proof of Proposition \ref{lemma1}}
\label{sec:supp-lemma1-proof}
\setcounter{proposition}{0}
We provide the proof of Proposition~\ref{lemma1}, which shows that our class-wise noise schedule generalizes the linear and uniform schedule and no Label Diffusion, \ie, a typical conditional DM described in Section~\ref{sec:background}.
Proposition~\ref{lemma1} and the following proof hold under two assumptions of $\psi_c \phi_c > 1$ for all $c$ and $t < T$.
\begin{proposition} 

\end{proposition}
\begin{proof}
First, when $\eta$ converges to $0$, the linear and uniform noise schedule can be derived as follows:
\begin{align}
     \lim_{\eta \to 0}\gamma_{t,c}
     & = \lim_{\eta \to 0} \frac{(\psi_c  \phi_c)^{\eta\frac{t}{T}}-1}{(\psi_c  \phi_c)^\eta -1} \\
     & = \lim_{\eta \to 0} \frac{(\psi_c  \phi_c)^{\eta\frac{t}{T}}-(\psi_c  \phi_c)^{0\frac{t}{T}}}{(\psi_c  \phi_c)^\eta -(\psi_c  \phi_c)^0}\\
     & = \lim_{\eta \to 0} \frac{\frac{(\psi_c  \phi_c)^{\eta\frac{t}{T}}-(\psi_c  \phi_c)^{0\frac{t}{T}}}{\eta}}{\frac{(\psi_c  \phi_c)^\eta -(\psi_c  \phi_c)^0}{\eta}} \\
     & = \lim_{\eta \to 0} \frac{\ln(\psi_c  \phi_c)\frac{t}{T}(\psi_c  \phi_c)^{\eta\frac{t}{T}}}{\ln(\psi_c  \phi_c)(\psi_c  \phi_c)^{\eta}} \\
     & = \lim_{\eta \to 0} \frac{t}{T} (\psi_c  \phi_c)^{\eta(\frac{t}{T}-1)} \\
     & = \frac{t}{T}.
\end{align}

When $\eta$ explodes to $+\infty$, the noise schedule without Label Diffusion can be derived as follows:
\begin{align}
    \lim_{\eta \to +\infty}\gamma_{t,c}
     & = \lim_{\eta \to +\infty} \frac{(\psi_c  \phi_c)^{\eta\frac{t}{T}}-1}{(\psi_c  \phi_c)^{\eta} -1} \\
    & = \lim_{\eta \to +\infty} \frac{(\psi_c  \phi_c)^{\eta(\frac{t}{T}-1)}-(\psi_c  \phi_c)^{-\eta}}{1 - (\psi_c  \phi_c)^{-\eta}} \\
    & = 0. \qquad \because \frac{t}{T}-1 < 0
\end{align}
\end{proof}
These two derivations show that our class-wise noise schedule generalizes previous works. 
Although we set $\eta = 1$ for our experiments, different $\eta$ can be searched and employed for controlling the noise schedules, which we leave as future work.
\subsection{Proof of Proposition~\ref{proposition2}}
\label{sec:supp-prop2-proof}
We provide the proof of Proposition~\ref{proposition2}.
\begin{proposition} 

\end{proposition}
\begin{proof}
\begin{align}
q(\ys|\xt) &= q(\ys^{1},\ys^{2},...,\ys^{H\times W}|\xt) \\
&= \prod_{i=1}^{H\times W} q(\ys^{i}|\xt), \\
\nabla_{\xt}\log q(\ys|\xt) &= \nabla_{\xt}\log \prod_{i=1}^{H\times W} q(\ys^{i}|\xt) \\
&= \sum_{i=1}^{H\times W} \nabla_{\xt}\log q(\ys^{i}|\xt).
\end{align}
Since $\ys^{i}|\xt$ are independent and our Label Diffusion forward process is applied to each pixel of $\ys$ independently, $\yt^{i}|\xt$ are independent. Therefore,
\begin{align}
q(\yt|\xt) &= q(\yt^{1},\yt^{2},...,\yt^{H\times W}|\xt) \\
&= \prod_{i=1}^{H\times W} q(\yt^{i}|\xt), \\
\nabla_{\xt}\log q(\yt|\xt) &= \nabla_{\xt}\log \prod_{i=1}^{H\times W} q(\yt^{i}|\xt)\\
&= \sum_{i=1}^{H\times W} \nabla_{\xt}\log q(\yt^{i}|\xt),\\
\EE_{q(\yt|\ys)}[\nabla_{\xt}\log q(\yt|\xt)] &= \sum_{i=1}^{H\times W} \EE_{q(\yt|\ys)}[\nabla_{\xt}\log q(\yt^{i}|\xt)]\\
&= \sum_{i=1}^{H\times W} \EE_{q(\yt^{i}|\ys)}[\nabla_{\xt}\log q(\yt^{i}|\xt)]\\
&= \sum_{i=1}^{H\times W} \EE_{q(\yt^{i}|\ys^{i})}[\nabla_{\xt}\log q(\yt^{i}|\xt)].
\end{align}
Thus, our proof can be substituted for proving the following statement:
\begin{align}
    \EE_{q(\yt^{i}|\ys^{i})}[\nabla_{\xt}\log q(\yt^{i}|\xt)] =(1-\gamma_t)\nabla_{\xt}\log q(\ys^{i}|\xt).
\end{align}
With a slight abuse of notation, we use $\ys$ to denote $\ys^{i}$, $\yt$ to denote $\yt^{i}$, and $f(\xt)$ to denote $f_i(\xt)$ for the rest of the proof.
Then, $\nabla_{\xt}\log q(\ys|\xt)$ can be derived as follows:
\begin{align}
q(\ys|\xt) &= \ys^Tf(\xt), \\
\nabla_{\xt}\log q(\ys|\xt) &= \frac {1} {q(\ys|\xt)} \frac {\partial f(\xt)} {\partial \xt}\ys \\
&= \frac {1} {\ys^Tf(\xt)} \frac {\partial f(\xt)} {\partial \xt}\ys.
\end{align}
Also, $\nabla_{\xt} \log q(\yt|\xt)$ can be derived as follows:
\begin{align}
q(\yt|\xt) &= \sum_{\ys}q(\yt,\ys|\xt) = \sum_{\ys}q(\yt|\ys,\xt)q(\ys|\xt) \\
&=\sum_{\ys}q(\yt|\ys)q(\ys|\xt) =\sum_{c=1}^{C+1}\yt^T \overline{\Q}_t \mathbf{e}_c\mathbf{e}_c^Tf(\xt) \\
&= \yt^T \overline{\Q}_t (\sum_{c=1}^{C+1}\mathbf{e}_c\mathbf{e}_c^T)f(\xt)\\
&=\yt^T \overline{\Q}_t f(\xt),\\
\nabla_{\xt}\log q(\yt|\xt) &= \frac {1} {q(\yt|\xt)} \frac {\partial f(\xt)} {\partial \xt} \overline{\Q}_t^T\yt \\
&= \frac {1} {\yt^T \overline{\Q}_t f(\xt)} \frac {\partial f(\xt)} {\partial \xt} \overline{\Q}_t^T\yt.
\end{align}
Therefore, $\EE_{q(\yt|\ys)}[\nabla_{\xt}\log q(\yt|\xt)]$ can be derived as follows:
\begin{align}
\EE_{q(\yt|\ys)}[\nabla_{\xt}\log q(\yt|\xt)] 
&= \sum_{\yt} q(\yt|\ys)\frac {1} {\yt^T \overline{\Q}_t f(\xt)} \frac {\partial f(\xt)} {\partial \xt} \overline{\Q}_t^T\yt\\
&= \frac {\partial f(\xt)} {\partial \xt}~\sum_{\yt} q(\yt|\ys)\frac {1} {\yt^T \overline{\Q}_t f(\xt)}  \overline{\Q}_t^T\yt\\
&= \frac {\partial f(\xt)} {\partial \xt}~\sum_{\yt} {\yt^T \overline{\Q}_t \ys}\frac {1} {\yt^T \overline{\Q}_t f(\xt)} {\overline{\Q}_t^T\yt}\\
&= \frac {\partial f(\xt)} {\partial \xt}~\sum_{\yt}  {\overline{\Q}_t^T}\frac {1} {\yt^T \overline{\Q}_t f(\xt)}{\yt}{\yt^T \overline{\Q}_t \ys}\\
&= \frac {\partial f(\xt)} {\partial \xt} \overline{\Q}_t^T~\left ( \sum_{\yt}  \frac {1} {\yt^T \overline{\Q}_t f(\xt)}\yt\yt^T\right ) \overline{\Q}_t \ys\\
&= \frac {\partial f(\xt)} {\partial \xt} \overline{\Q}_t^T~\left ( \sum_{c=1}^{C+1}  \frac {1} {\mathbf{e}_c^T \overline{\Q}_t f(\xt)}\mathbf{e}_c\mathbf{e}_c^T\right ) \overline{\Q}_t \ys\\
&= \frac {\partial f(\xt)} {\partial \xt} \overline{\Q}_t^T~\mathbf{D}~ \overline{\Q}_t \ys,
\end{align}
where $\mathbf{D}$ is a diagonal matrix. Since the class corresponding to the absorbing state, \ie, class ${C+1}$ does not exist in the original dataset, we have $q(\ys=\mathbf{e}_{C+1})=0 \Leftrightarrow [f(\xt)]_{C+1} = 0$. Therefore, $[\mathbf{D}]_{cc} = ((1-\gamma_t)[f(\xt)]_c)^{-1}$ when $c \neq C+1$, otherwise, $\gamma_t^{-1}$. As $\ys$ is a one-hot vector,
\begin{align}
\EE_{q(\yt|\ys)}[\nabla_{\xt}\log q(\yt|\xt)] &= \frac {\partial f(\xt)} {\partial \xt} \left( (1-\gamma_t)\frac {1} {\ys^Tf(\xt)}\ys + \gamma_t (\mathbf{1}+\frac{1-\gamma_t}{\gamma_t}\mathbf{e}_{C+1}) \right) \\
&= (1-\gamma_t)\frac {1} {\ys^Tf(\xt)}\frac {\partial f(\xt)} {\partial \xt}\ys  + \gamma_t \frac {\partial f(\xt)} {\partial \xt}(\mathbf{1}+\frac{1-\gamma_t}{\gamma_t}\mathbf{e}_{C+1})\\
&= (1-\gamma_t)\nabla_{\xt}\log q(\ys|\xt) + \gamma_t \nabla_{\xt} (\mathbf{1}+\frac{1-\gamma_t}{\gamma_t}\mathbf{e}_{C+1})^{T}f(\xt) \\
&= (1-\gamma_t)\nabla_{\xt}\log q(\ys|\xt) + \gamma_t \nabla_{\xt} 1 \\
&= (1-\gamma_t)\nabla_{\xt}\log q(\ys|\xt).
\end{align}
\end{proof}

\section{Algorithms}
\label{sec:supp-B}
Algorithm~\ref{alg:training} and \ref{alg:sampling} summarize the general training and sampling process of our SCDM, respectively.
While $\mathbf{m}$ has to be $\mathbf{e}_{C+1} \in \R^{C+1}$, we implemented the absorbing vector with a zero vector $\mathbf{0} \in \R^{C}$ to minimize the modification of the pretrained SIS model.
\begin{algorithm}
\def\NoNumber#1{{\def\alglinenumber##1{}\State #1}\addtocounter{ALG@line}{-1}}
\caption{Training}
\label{alg:training}
\begin{algorithmic}[1]
\REQUIRE image $\xs\in\mathbb{R}^{H\times W\times 3}$, label $\ys\in\mathbb{R}^{H\times W\times C}$, noise schedule $\alpha_{1:T},\gamma_{1:T}$, absorbing vector $\mathbf{m}\in\mathbb{R}^{C}$
\WHILE{$\text{not converged}$}
\STATE $\epsilon_\mathbf{x} \sim \mathcal{N}(\mathbf{0}, \mathbf{I})$ 
\hfill\COMMENT{// $\epsilon_\mathbf{x} \in \mathbb{R}^{H\times W \times 3}$} 
\STATE $ \forall_{i,j}  [\epsilon_\mathbf{y}]_{ij} \sim \text{Uniform} (0, 1) $
\hfill\COMMENT{// $ \epsilon_\mathbf{y} \in \mathbb{R}^{H\times W}$} 
\STATE $ t \sim \text{Uniform}(\{1, ... ,T\}) $
\STATE $\mathbf{x}_t\leftarrow \sqrt{\alpha_t}  \xs + \sqrt{1-\alpha_t}  \epsilon_\mathbf{x}$
\STATE $\forall_{i,j} [\mathbf{y}_t]_{ij} \leftarrow [\mathbf{y}_0]_{ij} \text{ if } [\epsilon_\mathbf{y}]_{ij} \ge \gamma_t \text{ else } \mathbf{m} $
\STATE $\text{Take a gradient descent step on }$
\nonumber $\mathcal{L}_\text{hybrid} \left(\epsilon_\xb,\epsilon_\theta({\mathbf{x}_t}, {\mathbf{y}_t}, t),\Sigma_\theta({\mathbf{x}_t}, {\mathbf{y}_t}, t) \right) $
\ENDWHILE
\end{algorithmic}
\end{algorithm}
\begin{algorithm}
\def\NoNumber#1{{\def\alglinenumber##1{}\STATE #1}\addtocounter{ALG@line}{-1}}
\caption{Sampling}
\begin{algorithmic}[1]
\REQUIRE $\text{label } \mathbf{y}_0\in\mathbb{R}^{H\times W\times C},~\text{noise schedule }\alpha_{1:T},\gamma_{1:T},  \text{guidance scale } s, \text{absorbing vector } \mathbf{m}\in\mathbb{R}^{C}, \newline \text{extrapolation scale } w$
\STATE $ \mathbf{x}_T \sim \mathcal{N}(\mathbf{0}, \mathbf{I})$
\FOR{$t \leftarrow T, ...,1$}
\STATE $ \forall_{i,j}  [\epsilon_\mathbf{y}]_{ij} \sim \text{Uniform} (0, 1) $
\hfill \COMMENT{// $ \epsilon_\mathbf{y} \in \mathbb{R}^{H\times W}$}
\STATE $\forall_{i,j} [\mathbf{y}_t]_{ij} \leftarrow [\mathbf{y}_0]_{ij} \text{ if } [\epsilon_\mathbf{y}]_{ij} \ge \gamma_t \text{ else } \mathbf{m} $
\STATE $\mathbf{\epsilon_{\mathbf{x}}} \sim \mathcal{N}(\mathbf{0}, \mathbf{I})\text{ if }t\neq1\text{ else }\mathbf{0}$
\hfill \COMMENT{// $\epsilon_\mathbf{x} \in \mathbb{R}^{H\times W \times 3}$}
\STATE $\tilde{\epsilon}_\theta \leftarrow \epsilon_\theta(\mathbf{x}_t,\mathbf{y}_t,t) + s(\epsilon_\theta(\mathbf{x}_t,\mathbf{y}_t,t)-\epsilon_\theta(\mathbf{x}_t,\mathbf{0},t))$
\hfill\COMMENT{// Classifier-free guidance}
\STATE $\xs^{(t)} \leftarrow \frac{\xt - \sqrt{1-\alpha_t} \tilde{\epsilon}_\theta}{\sqrt{\alpha_t}}$
\hfill\COMMENT{// Reparameterize $\epsilon$-pred model to predict $\xs$}
\STATE $\xs^{(t)} \leftarrow \text{dynamic\_thresholding} (\xs^{(t)}) $
\hfill\COMMENT{// Dynamic thresholding}
\STATE $\tilde{\xb}_0^{(t)} \leftarrow \xs^{(t)} + w\left(\xs^{(t)}  - \tilde{\xb}_0^{(t+1)}\right) \text{ if } t\neq T \text{ else } \xs^{(t)}$
\hfill\COMMENT{// Extrapolation}
\STATE $\mu_t = \frac{1}{\sqrt{1-\alpha_t}} \left( \sqrt{\alpha_{t-1}} (1-\frac{\alpha_t}{\alpha_{t-1}}) \tilde{\xb}_0^{(t)} \right. \qquad \qquad \qquad$
\nonumber $\left. \qquad \qquad \qquad \qquad \qquad  + \sqrt{\frac{\alpha_t}{\alpha_{t-1}}} (1-\alpha_{t-1}) \xt \right)$
\STATE  $\xtt \leftarrow \mu_t + \Sigma_\theta(\xt, \yt, t)^{\frac{1}{2}} \epsilon_{\mathbf{x}}$
\hfill\COMMENT{// Sample $\xtt$}
\ENDFOR
\STATE \bfseries Return $\mathbf{x}_0$
\end{algorithmic}
\label{alg:sampling}
\end{algorithm}
\section{Experimental Setup}
\label{sec:supp-C}
\subsection{Metrics}
\label{sec:supp-metrics}
\subsubsection{Experiments in Section~\ref{sec:experiments} and 
\ref{sec:abl1}}
\paragraph{FID (fidelity).}
To quantitatively measure generation quality, we adopt Fréchet Inception Distance (FID) \cite{heusel2017gans} as our evaluation metrics.
FID captures the image's visual quality by comparing the distribution between real and generated images on the inception network's~\cite{szegedy2017inception} feature space.
\paragraph{mIoU (semantic correspondence).}
Additionally, following previous works, we assess the alignment of the synthesis results with ground truth semantic maps by using off-the-shelf pretrained segmentation networks and report mean Intersection-over-Union~(mIoU).
We feed the sampled images to the pre-trained, off-the-shelf semantic segmentation networks: U-Net~\cite{lee2020maskgan,ronneberger2015u} for CelebAMask-HQ, ViT-Adapter-S~\cite{chen2022vision} with UperNet~\cite{xiao2018unified} for ADE20K, and DeepLabV2~\cite{chen2015semantic} for COCO-Stuff.
For a fair comparison, we tried to measure and reproduce the mIoU of all baseline samples.
For the standard SIS setting, when samples or checkpoints were not publicly available, we report the results from \cite{tang2023edge} and denote them with `*' in the table.
\paragraph{LPIPS (diversity).}
In addition to FID and mIoU, we adopt Learned Perceptual Image Patch Similarity (LPIPS)~\cite{zhang2018unreasonable} as our generation diversity metrics in Section~\ref{sec:exp_results} and compare the average distance between multi-modal synthesis results.
Specifically, following~\cite{tan2021diverse}, we sample 10 different images for each semantic map, compute the pairwise LPIPS distance, and average over the label maps.
\subsubsection{Experiments in Section~\ref{sec:abl2}}
\paragraph{LPIPS.} In Section~\ref{sec:abl2}, we adopt LPIPS to compare similarity, as LPIPS essentially measures the similarity of two different images.
Therefore, lower LPIPS is better in our experimental setting.
We calculate LPIPS by comparing images generated with clean and noisy labels, and averaging the results.
\paragraph{PSNR.} Peak Signal-to-Noise Ratio (PSNR) is a metric that evaluates the reconstruction quality, \ie, higher PSNR is better.
We calculate PSNR by comparing images generated with clean and noisy labels, and averaging the results.
\paragraph{SSIM.} Structural Similarity Index Measure (SSIM)~\cite{SSIM} is a metric to evaluate the similarity of a pair of images, \ie, higher SSIM is better.
We calculate SSIM by comparing images generated with clean and noisy labels, and averaging the results.
\paragraph{FID.} In Section~\ref{sec:abl2}, we use FID to compare the distribution between the set of generated images conditioned on clean labels and the set of generated images conditioned on noisy labels.
In our experimental setting, lower FID indicates that the two distributions are closer, which is better.
\subsection{Baselines}
\label{sec:supp-baselines}
We compare our model with GAN-based methods such as pix2pixHD~\cite{wang2018high}, SPADE~\cite{park2019semantic}, DAGAN~\cite{tang2020dual}, SCGAN~\cite{wang2021image}, CLADE~\cite{tan2021efficient}, CC-FPSE~\cite{liu2019learning}, GroupDNet~\cite{zhu2020semantically}, INADE~\cite{tan2021diverse}, OASIS~\cite{sushko2020you}, RESAIL~\cite{shi2022retrieval}, SAFM~\cite{lv2022semantic}, ECGAN~\cite{tang2023edge}, ECGAN++~\cite{tang2023edge2}, and Unconditional Generator with Semantic Mapper~\cite{chae2024semantic}.
We also evaluate our method with diffusion-based approaches, SDM~\cite{wang2022semantic}, LDM~\cite{rombach2022high}, and PITI~\cite{wang2022pretraining}.
Although FLIS~\cite{xue2023freestyle} and ControlNet~\cite{zhang2023adding} do tackle SIS with diffusion models, they utilize extra text inputs (captions) with the semantic maps.
For a fair comparison, we did not include FLIS and ControlNet as baselines.

As the pretrained weights of pix2pixHD and PITI on ADE20K are not publicly available, we excluded them from our noisy SIS experiments on ADE20K.
Additionally, the pretrained weights and the full code of RESAIL, ECGAN, and Unconditional Generator are not available in public, therefore we did not include them in our noisy SIS experiments baselines as well.

For the standard SIS setting, we report all the results evaluated with the uploaded samples or checkpoints except when they are unavailable.
For PITI, as the COCO-Stuff pretrained weights were only available, we sampled images with the weights and reported the LPIPS and mIoU scores.
For CelebAMask-HQ results of SDM, we report the reproduced results.
Although LDM did not include SIS experiments on benchmark datasets in their official paper, we trained their model with their SIS training configurations and reported the results in both noisy and standard SIS experiments, for more comparison with DM-based approaches.
\subsection{Implementation Details}
\label{sec:supp-implementation}
We followed the overall architecture of SDM~\cite{wang2022semantic}, a diffusion model for semantic image synthesis.
Specifically, we embed the condition into the U-Net decoder with SPADE~\cite{park2019semantic} and constructed a similar U-Net structure, where the architecture details are publicly available in their GitHub repository.
SDM trained their model with two stages: (1) pre-trained their conditional diffusion model and (2) fine-tuned the pre-trained model by randomly dropping out the semantic label maps in order to use classifier-free guidance in sampling.
We trained our SCDM starting from their pre-trained weights to reduce the training time.
We trained our model with 4 NVIDIA RTX A6000 GPUs for 1-2 days.
Image sampling and evaluations are conducted on a server with 8 NVIDIA RTX 3090 GPUs.

For the hyperparameters, we used $\lambda=0.001$ for our hybrid loss~\cite{nichol2021improved}, classifier-free guidance~\cite{ho2022classifier} scale $s=0.5$, $20\%$ of drop rate for the SIS experiments on three datasets, noise schedule hyperparameter $\eta=1$, dynamic thresholding~\cite{saharia2022photorealistic} percentile of $0.95$, and the extrapolation scale of $w=0.8$.
Except for the ablation study (in Table~\ref{tab:ablation}), we didn't use extrapolation for experiments with 1000 sampling steps, as extrapolation is designed to enhance small-step generation.
We also utilized exponential moving average~(EMA) with 0.9999 decay, and AdamW~\cite{loshchilov2019decoupled} optimizer.
We employed instance labels of CelebAMask-HQ and COCO-Stuff to produce instance edge maps and used them as additional input following SDM.

Furthermore, for the class-wise noise schedule, we clamped  $\phi_c$ to be at least 1, to down the diffusion of rare classes without 
speeding up the transition of frequent classes.
We set the class-wise noise schedules of the `unlabeled' class of ADE20K and COCO-Stuff to be the uniform schedule, \ie, $\gamma_{t,c} = t/T$ for $c=$`unlabeled'.
In addition, we implemented the calculation of $\gamma_{t,c}$ using $t-1$ instead of $t$ in the code to ensure that the assumption of Lemma 1 is satisfied.

For our experiments on ADE20K, we modified the calculation of $\psi_c  \phi_c$ values of the class-wise noise schedule with an empirically chosen scale factor.
This is because ADE20K has numerous classes and high diversity in the dataset compared to the number of training data, while CelebAMask-HQ has relatively low diversity and COCO-Stuff has a large number of training data.
Specifically, we modified $\psi_c  \phi_c$ as:
\begin{equation}
    \psi_c  \phi_c:= \lambda~\Pr(\yb_{ij}=c)^{-1} \log \left( \Pr(\xb\in \mathcal{X}_c)^{-1}\right),
\end{equation}
where $\lambda$ is the scale factor~($\approx 0.278$) making the smallest value of $\psi_c  \phi_c$ converge to 1, \ie, $\gamma_{t,c} = t/T$. 

\subsection{Efficient Trajectory Representation}
\label{sec:supp-trajectory}
The generation process of SCDM utilizes $\yb_T, ..., \yb_1$ at each timestep $t=T, ... ,1$.
As we have not defined the reverse process of the labels, we use the label forward process $q(\yb_{1:T}|\ys)$, which sequentially constructs $\yb_1, ..., \yb_T$ from $\ys$.
Consequently, $\yb_{1:T}$ needs to be \textit{cached} during sampling in order to be accessed from $t=T$ to $t=1$.
However, this results in a significant memory consumption, posing a challenge to our sampling process.
To address this issue, we leverage $\gamma_{t,c}$ from Eq~\eqref{eq:marginal}.
Specifically, we represent the entire trajectory $\yb_{1:T}$ with a single matrix $\mathbf{U} \in \mathbb{R}^{H\times W}$, where $\mathbf{U}_{ij}$ denotes the `timestep' that $(i,j)$-th pixel is masked.
Suppose that $(i,j)$-th pixel corresponds to class $c$.
Since $\gamma_{t,c}$ can be expressed as a strictly monotonic function $\gamma_c(t)$, we can easily sample $\mathbf{U}_{ij}$ using the inverse CDF method on $\gamma_c^{-1}(u)$, where $u\sim \text{Uniform}(0,1)$.

\section{Detailed Explanations on $\psi_c$ and $\phi_c$}
\label{sec:supp-psiphi}
\begin{figure}[t!]
\centering
\includegraphics[width=0.5\columnwidth]{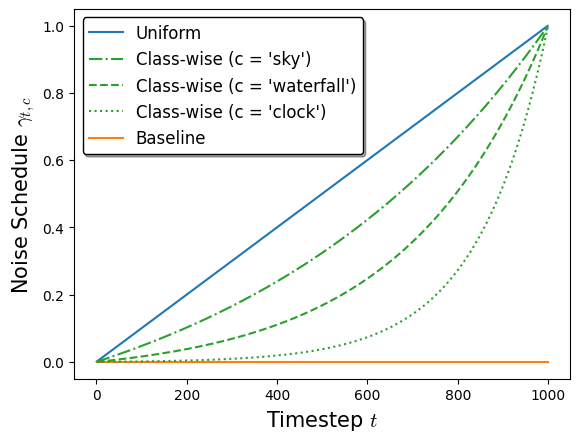}
\caption{\textbf{Visualization of $\gamma_{t,c}$ throughout diffusion in the baseline, linear and uniform, and class-wise noise schedule.} $\gamma_{t,c}$ indicates the probability of a label $c$ has transitioned to the absorbing state until timestep $t$.
In our class-wise schedule, small and rare objects tend to be intact relatively longer in the diffusion process, \eg, clock.}
\label{fig:graph}
\end{figure}
In this section, we elaborate on $\psi_c$ and $\phi_c$ as introduced in Section~\ref{sec:transition-matrix-and-noise-scheduling} and provide some visual aid for $\gamma_{t,c}$.
$\psi_c$ accounts for the area ($\approx$ object size) of class $c$.
Eq.~\eqref{eq:itf} defines $\psi_c$, which we estimated with training data.
$\psi_c$ ensures that classes with smaller average areas transition to the masked state later.
Specifically, the term is defined as the inverse of the ratio of average total image pixels to the average class pixels.
As the average area of class $c$ decreases, the $\psi_c$ value increases.
On the other hand, $\phi_c$ ensures a slow diffusion (to the masked state) of classes that rarely appear in the dataset.
It is defined in Eq.~\eqref{eq:idf}, and it can be regarded as the inverse frequency of the class in training images.
A larger $\phi_c$ value corresponds to a class that rarely appears in the dataset.
In Figure~\ref{fig:graph}, we visualize $\gamma_{t,c}$ with different noise schedules.
The class-wise schedule of class `clock' ensures that the class is diffused at a relatively later time, as its $\psi_c \phi_c$ value is large.
\section{Limitations}
\label{sec:limitations}
One possible limitation of this work is that as the method aims to generate clean images from the noisy labels, the user's intention might be ignored, \ie, the user might actually want to generate noisy images. 
Controlling faithfulness to the semantic maps could be one possible future direction, which we leave as future work.
Another limitation could be that our method cannot dynamically learn the optimal noise schedules for labels.
If the proposed method can learn the optimal noise schedule, it will improve the flexibility and power of semantic image synthesis.
\section{ADE20K Dataset Annotation Examples}
\label{sec:supp-D}
In this section, we provide examples of erroneous/inconsistent annotations.

\paragraph{Jagged edges (DS).}
We observe that some images have jagged edges as label maps given in Figure~\ref{fig:supp-jagged}.
This could be the result of erroneous annotations but also could occur due to the coarse and low resolution of the images.
ADE20K dataset contains images of various sizes.
Similarly, the size of user inputs can be diverse in real-world applications.
\paragraph{Incomplete masks (Edge).}
Figure~\ref{fig:supp-incomplete} shows some examples of incomplete masks, where the pixels on the boundaries of instances were left as `unlabeled'. 
This usually occurs due to their inherent ambiguity.
\paragraph{Inconsistent annotation (\eg, umbrella).}
Similar to Figure~\ref{fig:miou}, Figure~\ref{fig:supp_umbrella} shows training images~(first row) and their annotations~(second row).
Some umbrellas are correctly annotated with the shafts~(Figure~\ref{fig:supp_umbrella_yes}), while others are annotated without the shafts and handles~(Figure~\ref{fig:supp_umbrella_no}).
Trained with these inconsistent labels, baselines did not synthesize the shaft for the parasol (class `umbrella') in Figure~\ref{fig:miou}, whereas our SCDM successfully generated the shafts leveraging our Label Diffusion.
\section{Additional Experimental Results}
\label{sec:supp-E}
\subsection{Standard SIS Setting} 
\label{sec:exp_results}
\subsubsection{Generation quality comparison.}
Figure~\ref{fig:celeba} presents the qualitative comparison with other SIS models and our method on CelebAMask-HQ.
Remarkably, our method generates realistic images even for an intricate semantic map with fluttering hair over the face, while others struggle with generating the occluded eye naturally. 

As Figure~\ref{fig:ade20k} illustrates the qualitative results on ADE20K, our method demonstrates its advantage over previous methods in generating details, and convincing images within the given semantics.
Especially, our SCDM generates a more realistic and clear image of the waterfall than the baselines.
In addition, our approach more naturally depicts the rocks visible through the water, compared to the baselines.
For ADE20K, we used slightly modified $\psi_c\phi_c$ with some scale factors, and the details are in Appendix~\ref{sec:supp-implementation}.

Lastly, on COCO-Stuff our method outperforms the state-of-the-art methods in FID and LPIPS.
The qualitative comparison with others and ours is given in Figure~\ref{fig:coco}.
Ours synthesized a more realistic and clearer image of a clock tower, by successfully generating the numbers and hands of the clock.
Furthermore, ours generated a more natural image of the tree branches over the clock tower.

\begin{figure}[t!]
\centering
\includegraphics[width=0.9\columnwidth]{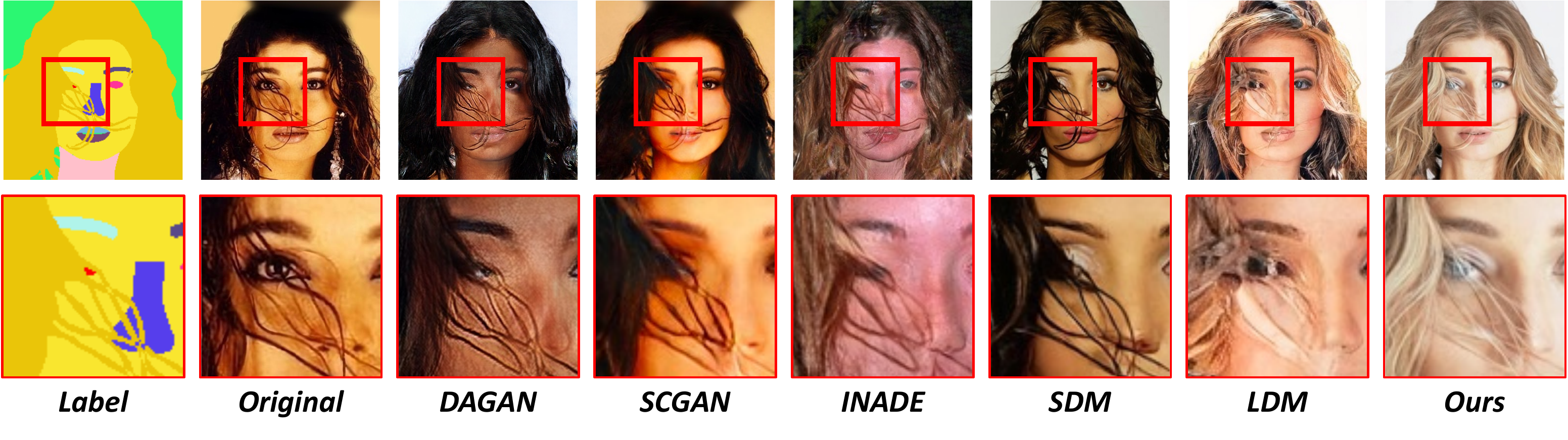}
\caption{ \textbf{Qualitative results on CelebAMask-HQ.} Ours successfully catches the fine-grained details and generates realistic images conditioning on challenging semantic labels with highly occluded objects (\eg, eye covered by hairs). }
\label{fig:celeba}
\end{figure}
\begin{figure}[t!]
\centering
\includegraphics[width=0.8\columnwidth]{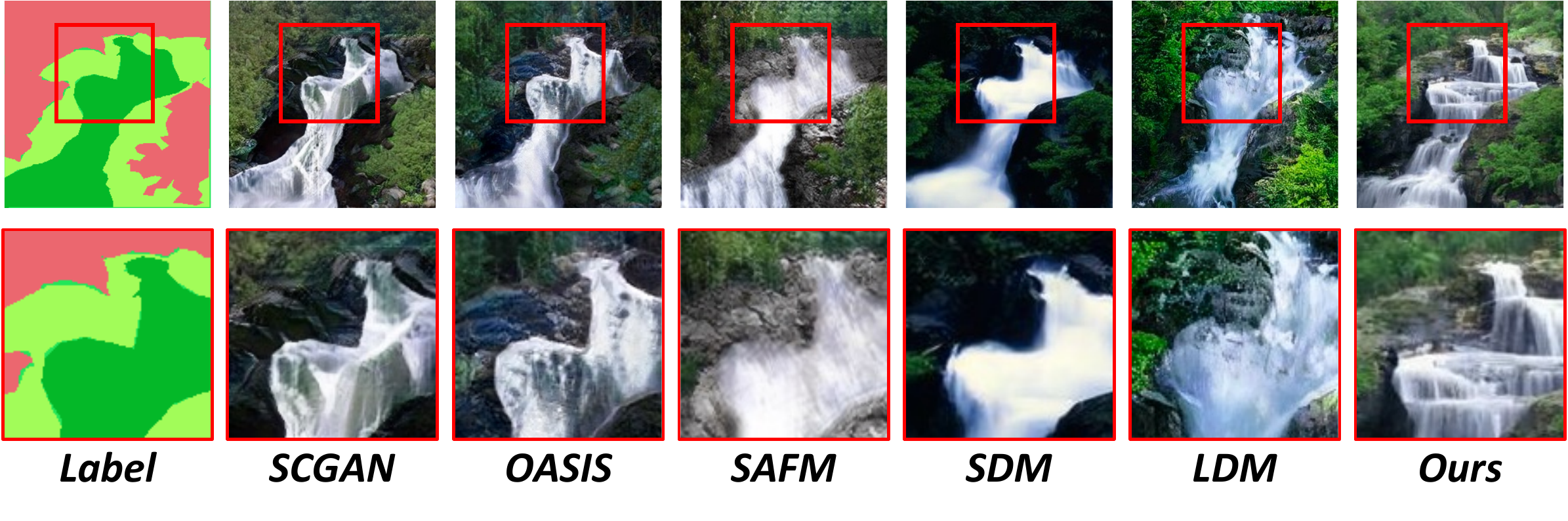}
\caption{ \textbf{Qualitative results on ADE20K.} Ours is capable of synthesizing realistic and clear images, as our generated images show more depth and clearer results of the waterfall.}
\label{fig:ade20k}
\end{figure}
\begin{figure}[t!]
\centering
\includegraphics[width=0.8\columnwidth]{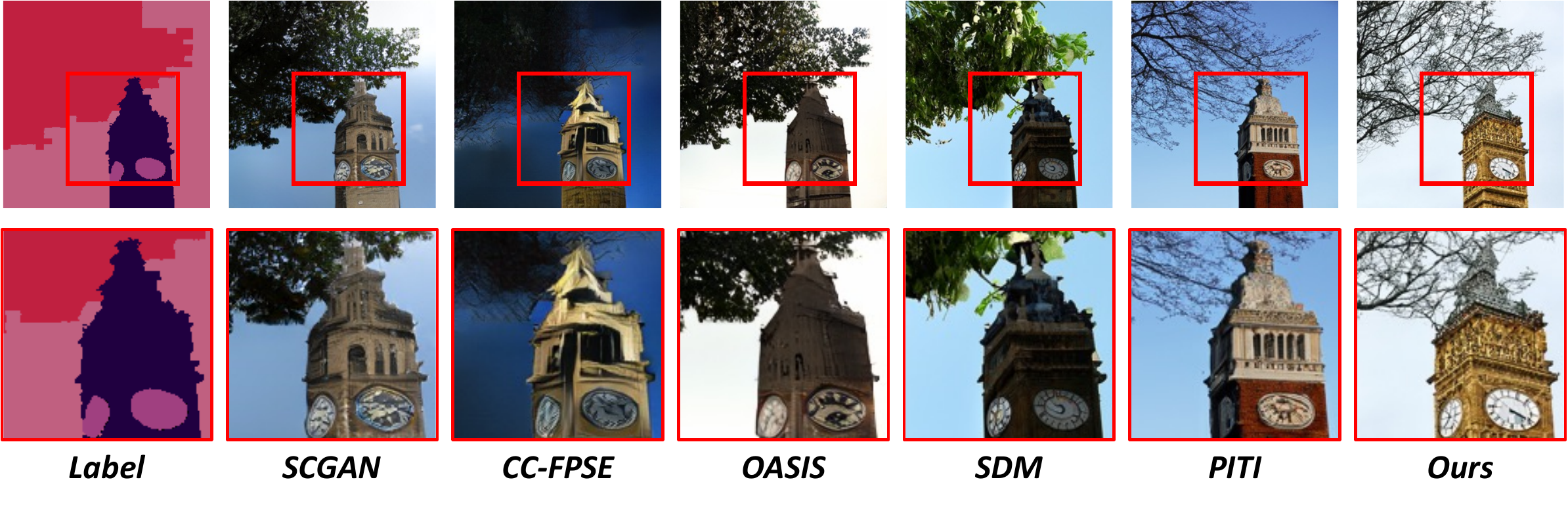}
\caption{ \textbf{Qualitative results on COCO.} 
Our SCDM clearly captures the details (\eg, the hands on the clock) and naturally depicts the branches over the clock tower.
}
\label{fig:coco}
\end{figure}
\begin{figure}[t!]
\centering
\includegraphics[width=0.6\columnwidth]{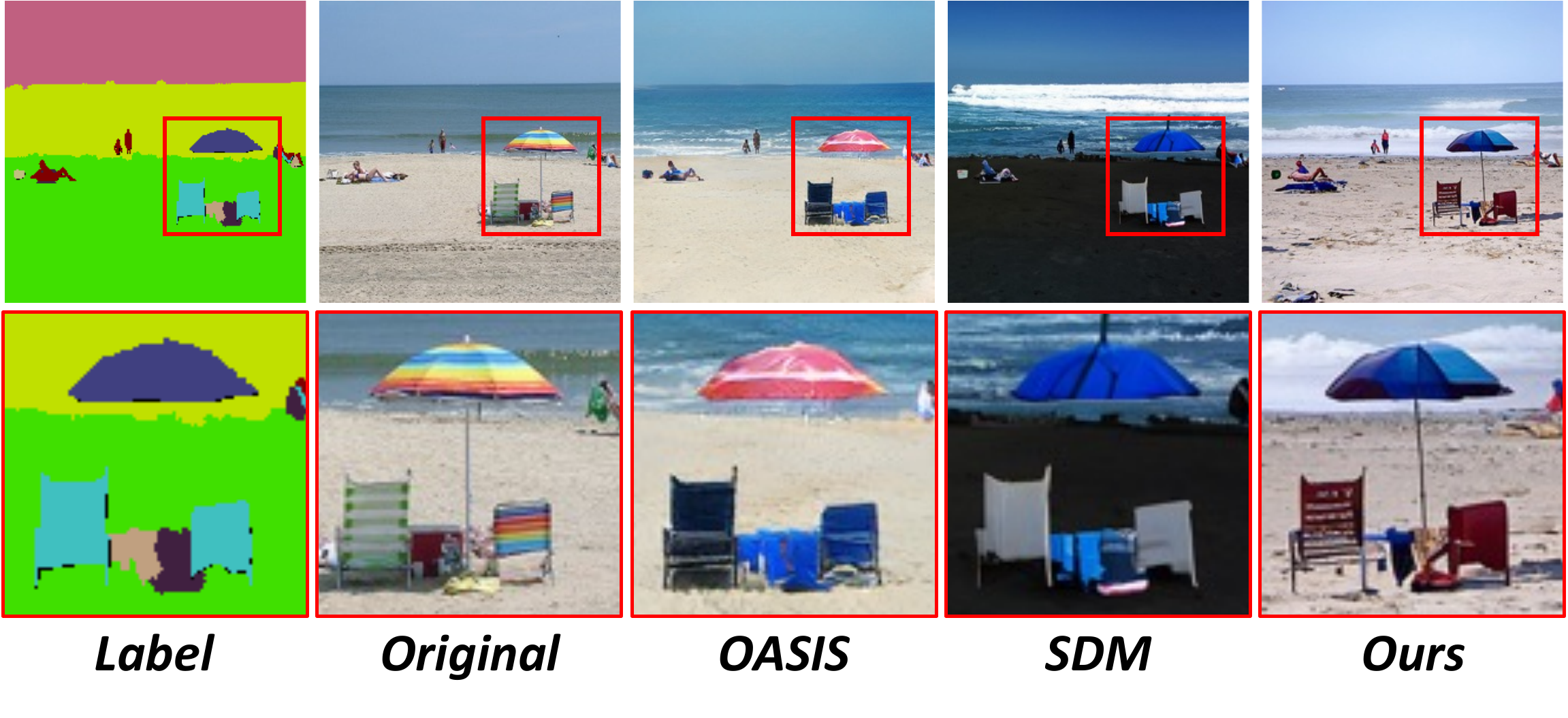}
\caption{ \textbf{Robustness to inconsistent semantic labels.} 
Our SCDM exhibits robustness to inconsistent semantic labels. 
Although the parasol~(class `umbrella') in the original image has a shaft, it is not accurately annotated in the semantic map.
Interestingly, ours synthesized a natural image with the missing shaft whereas all baselines failed to generate it. }
\label{fig:miou}
\end{figure}
\subsubsection{Analysis on semantic correspondence.}
We evaluate the semantic correspondence using pre-trained segmentation networks and compare our method with baselines on mIoU, and the results are in Table~\ref{tab:sis-main}.
SCDM shows better performance than the state-of-the-art on CelebAMask-HQ, while for the other two datasets, diffusion-based approaches including ours show comparatively weaker performance than adversarial methods such as SAFM or ECGAN.
We believe that one of the reasons for this could be the erroneous annotation of images.
For example, `umbrella' is often annotated without a shaft and a handle~(examples are given in Appendix~\ref{sec:supp-D}). 
In Figure~\ref{fig:miou}, baselines trained with these images did not synthesize the shaft for the given semantic map of a parasol (class `umbrella') without a shaft, while ours did.
Although ours is more realistic, this can result in lower mIoU scores.
We also suspect that the performance of the off-the-shelf segmentation models used for mIoU evaluation is not very robust. 
We ran the segmentation model on ground-truth (\ie, real and clean) images and it shows 
a poor mIoU with semantic labels (43.1 on ADE20K and 34.7 on COCO).
Additionally, it is noteworthy that baselines with high mIoU scores such as RESAIL, SAFM, and ECGAN leverage `pretrained' segmentation models in their semantic alignment losses (denoted by `Seg' in Table~\ref{tab:sis-main}).
In contrast, ours does not rely on such explicit guidance for correspondence.
\subsection{Extrapolation Hyperparameter Search}
\label{sec:supp-extr-w}
\begin{table}[t]
\caption{\textbf{Extrapolation hyperparameter search on \textit{ADE20K}.} The samples are generated with 25 sampling steps.
    }
    \label{tab:supp-extr}
    \vskip 0.15in
    \begin{center}
    \setlength{\tabcolsep}{3pt}
    \resizebox{0.15\columnwidth}{!}{
    \begin{tabular}{c|c}
        \toprule
        \multirow{1}{*}{$w$} & \multicolumn{1}{c}{\textbf{FID ($\downarrow$) }}\\
        \midrule
        \midrule
        0.4 & 33.0 \\
        0.5 & 31.6 \\
        0.6 & 30.3  \\
        0.7 & 29.2  \\
        0.8 & \textbf{27.7} \\
        0.9 & 30.9 \\
        \bottomrule
    \end{tabular}
    }
    \end{center}
\end{table}
For the extrapolation hyperparameter $w$, we searched for values in [0.4, 0.5, 0.6, 0.7, 0.8, 0.9], following a similar protocol of \cite{ho2022classifier}.
The search results are in Table~\ref{tab:supp-extr}, and we chose the value $w=0.8$ in terms of the generation quality (FID).
\subsection{Visualization of Label Diffusion}
\label{sec:supp-vis}
We show the intermediate steps of our Label Diffusion as a visual aid for understanding the two different noise schedules.
Figure~\ref{fig:supp_uniform} and \ref{fig:supp_classwise} illustrate the intermediate results of the diffused labels in the Label Diffusion process using the linear and uniform noise schedule and the class-wise schedule, respectively.
The first row of each subfigure shows the diffused semantic map given to the model during the discrete diffusion process, while the second row magnifies and displays the label of a clock on a cabinet from the first row.
The $\psi_c \phi_c$ values of the `clock' and `cabinet' classes are $651.3$ and $17.3$, respectively.
As the class `clock' occupies relatively small areas in the dataset, the class-wise schedule makes the clock diffused slowly, and denoised fastly during the generation process. 
Meanwhile, in the uniform noise schedule, the `clock' class is perturbed faster than in the class-wise schedule~(Figure~\ref{fig:supp_classwise}), because the uniform schedule evenly diffuses all class labels in the label map.
\subsection{Effect of Class-wise Noise Schedule}
\label{sec:supp-classwise}
\begin{table}[t]
\caption{\textbf{Quantitative noise schedule comparison on semantic correspondence (mIoU ($\uparrow$)).}
    The results are sampled over 1000 sampling steps.
    }
    \label{tab:supp-coco}
    \vskip 0.15in
    \begin{center}
    \setlength{\tabcolsep}{3pt}
    \resizebox{0.6\columnwidth}{!}{
    \begin{tabular}{l| c | c | c}
        \toprule
        \textbf{Noise Schedule} & \textbf{CelebAMask-HQ} & \textbf{ADE20K} & \textbf{COCO-Stuff} \\
        \midrule
        \midrule
        \textbf{Linear \& Uniform} & 76.8 & 43.0 & 36.8 \\
        \midrule
        \textbf{Class-wise} & \textbf{77.2} & \textbf{49.4} & \textbf{38.1} \\
        \bottomrule
    \end{tabular}
    }
    \end{center}
\end{table}
In this section, we show the quantitative comparison of the class-wise schedule and the linear and uniform schedule, continuing the discussion from Section~\ref{sec:noise_schedule_analysis} of the main paper.
The results are given in Table~\ref{tab:supp-coco}, on three (original) benchmark datasets.
The results are sampled with 1000 sampling steps, without extrapolation.
As demonstrated by the mIoU gains, the class-wise noise schedule increases the semantic correspondence of the generated images.
Additionally, we provide the qualitative comparison on COCO-Stuff in Figure~\ref{fig:supp-coco-classwise}.
\subsection{Further Discussion and Analysis on Class Guidance}
\label{sec:supp-cfg-scale}
In Section~\ref{sec:discussion}, we analyzed Label Diffusion regarding the conditional score that our method approximates.
Although our method does not have the same guidance as the baseline with fixed labels and time-dependent scaling, we further assess the effect of scaling and compare it with our method.
Specifically, we modify the fixed classifier-free guidance scale through scheduling, \ie, $s$ to $s(1-\gamma_t)$ in Eq.~\eqref{eq:cfg}, resulting in comparable fidelity with the baseline using fixed guidance (FID ($\downarrow$) of 28.6 (CFG scale scheduling) and 28.1 (baseline)).
Nevertheless, it still yields suboptimal results compared to ours (FID of 26.9).
What further distinguishes our method from CFG scale scheduling is that our absorbing state explicitly informs the model that the pixel is unconditional, facilitating the natural generation of the ambiguous parts in an image.
Some concrete examples of the ambiguous parts are given in Figure~\ref{fig:celeba} and \ref{fig:coco}, where the eye is occluded by hairs and the tree branches are over the clock tower.
\subsection{Validation of SCDM Generation Process}
\label{sec:supp-validation-gen}
\begin{table}[t]
\caption{\textbf{Sampling results with different noise schedules for generation process on \textit{ADE20K}.}
    The results are sampled from (a) SCDM trained with a class-wise noise schedule ($\eta=1$), and (b) baseline (trained without Label Diffusion, \ie, $\eta=+\infty$).
    }
    \label{tab:supp-y0}
    \vskip 0.15in
    \begin{center}
    \setlength{\tabcolsep}{3pt}
    \resizebox{0.5\columnwidth}{!}{
    \begin{tabular}{c|l|c| c | c}
        \toprule
        \textbf{Model} & \textbf{Sampling noise schedule}& \textbf{$\eta$} & \textbf{FID} & \textbf{mIoU} \\
        \midrule
        \midrule
        \multirow{3}{*}{\textbf{(a) Ours}} & {No Label Diffusion} & $+\infty$ & 30.6 & 47.8 \\
        & {Linear and uniform} & $0$ & 34.5 & 28.9 \\
        & {Class-wise} & $1$ & \textbf{27.7}& \textbf{48.7} \\
        \midrule
        \multirow{3}{*}{\textbf{(b) Base}} & {No Label Diffusion} & $+\infty$ & \textbf{28.1} & \textbf{48.6} \\
        & {Linear and uniform} & $0$ & 76.2 & 15.7 \\
        & {Class-wise} & $1$ & 44.0 & 43.1 \\
        \bottomrule
    \end{tabular}
    }
    \end{center}
\end{table}
In this section, we validate the SCDM generation process by showing the effect of our Label Diffusion during sampling.
Although our forward process diffuses images and labels independently~(Eq.~(9) of the main paper), our model successfully learns the joint distribution of the images and labels, and therefore generates the images dependent on the given perturbed labels.

To show this empirically, we first trained our model with the class-wise noise schedule (\ie, $\eta=1$). 
Then we sampled images with three different Label Diffusion noise schedules by controlling $\eta$: (1) $\eta=+\infty$, \ie, no Label Diffusion and using fixed $\ys$ for the entire generation process, (2) $\eta=0$, \ie, the linear and uniform noise schedule, and (3) $\eta=1$, \ie, the class-wise noise schedule that was leveraged during training.
The results are reported in Table~\ref{tab:supp-y0}(a) and we used 25 sampling steps.
As the FID scores show that the quality of the samples not generated with the trained noise schedule is worse than the images sampled with $\eta=1$, we can observe that Label Diffusion affects the generation process.

Additionally, we sampled images using the baseline model with the same noise schedules, \ie, $\eta=+\infty$, $\eta=0$, and $\eta=1$.
The results are reported in Table~\ref{tab:supp-y0}(b) and we used 1000 sampling steps (without extrapolation).
The performance deteriorates when Label Diffusion is applied to the baseline (\ie, trained without Label Diffusion).
This also indicates that Label Diffusion contributes to the generation process, validating the SCDM generation process.
\subsection{More Qualitative Results - Multimodal Generation}
\label{sec:supp-multimodal}
We provide additional generation results of our method, showing SCDM's capability of generating diverse samples.
Figure~\ref{fig:supp_diversity} shows multimodal generation results on CelebAMask-HQ, ADE20K, and COCO-Stuff (standard SIS setting).
\begin{figure}[t!]
\centering
\subfigure[\textit{CelebAMask-HQ}.]{\includegraphics[width=0.3\columnwidth]{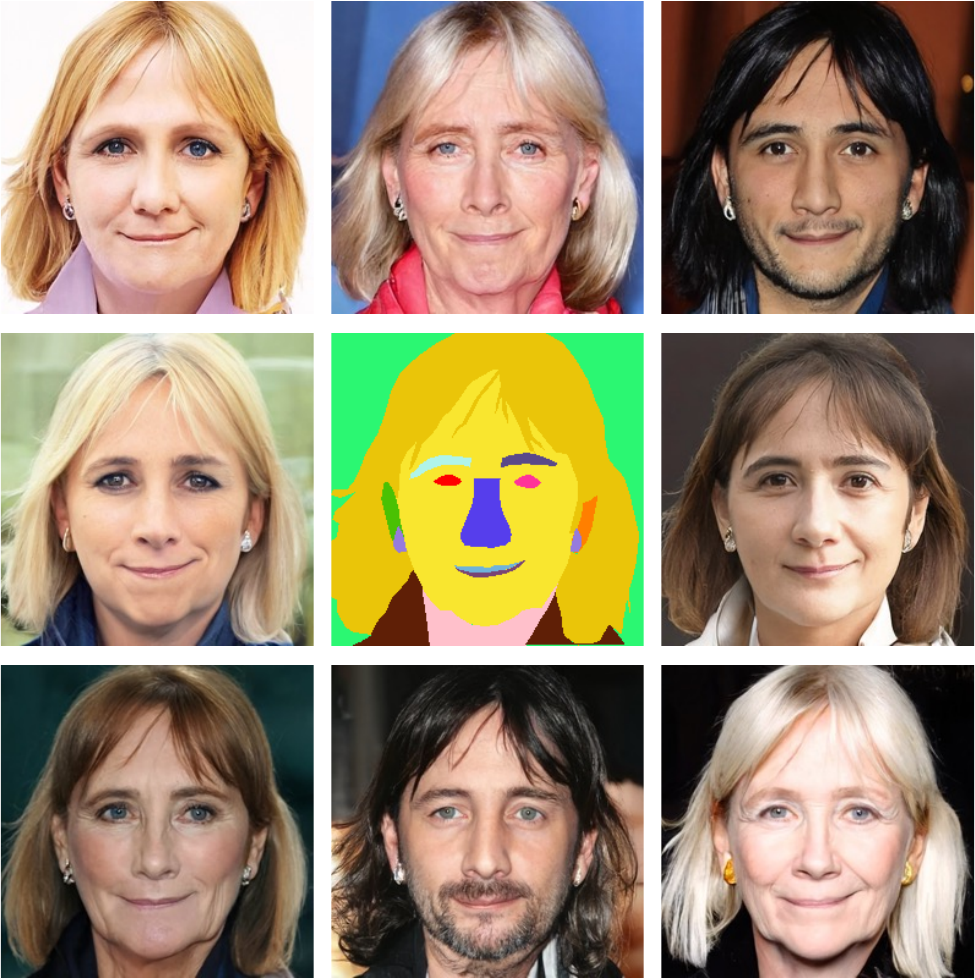}
\label{fig:supp-diversity-celeba}}
\hfill
  \subfigure[\textit{ADE20K}.]{\includegraphics[width=0.3\columnwidth]{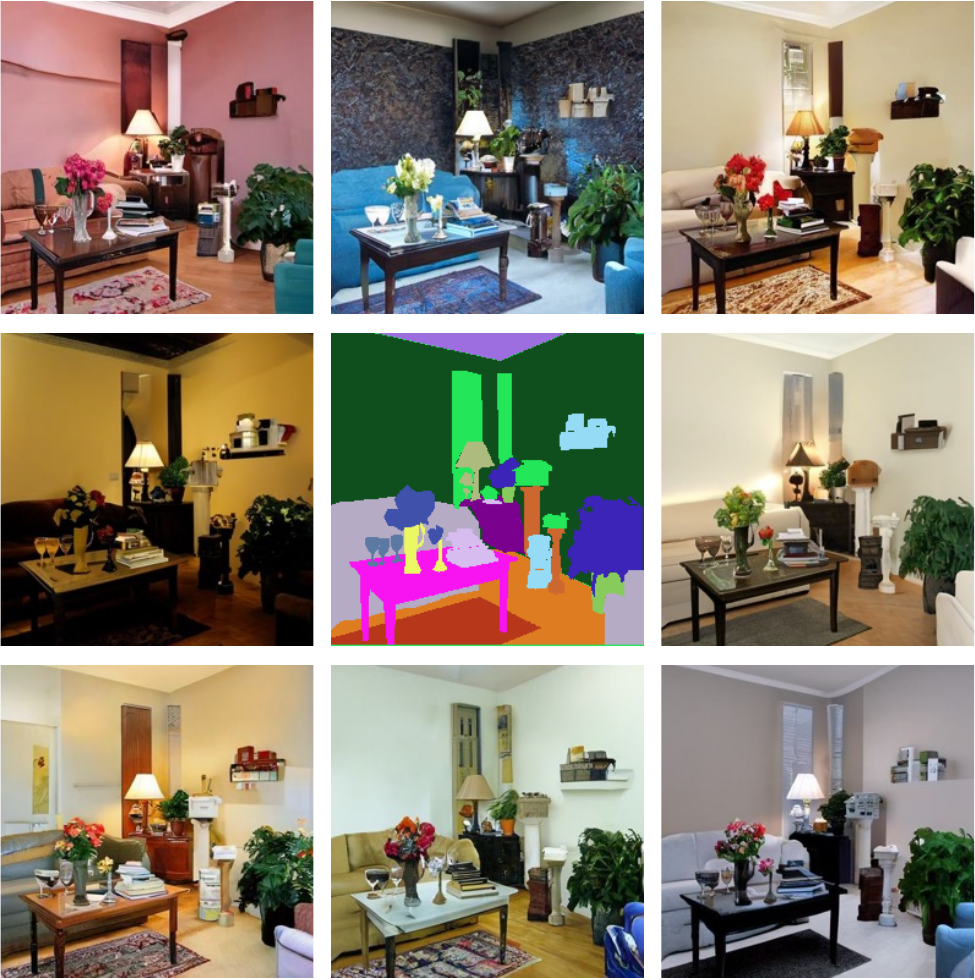}
  \label{fig:supp-diversity-ade20k}}
  \hfill
   \subfigure[\textit{COCO-Stuff}.]{\includegraphics[width=0.3\columnwidth]{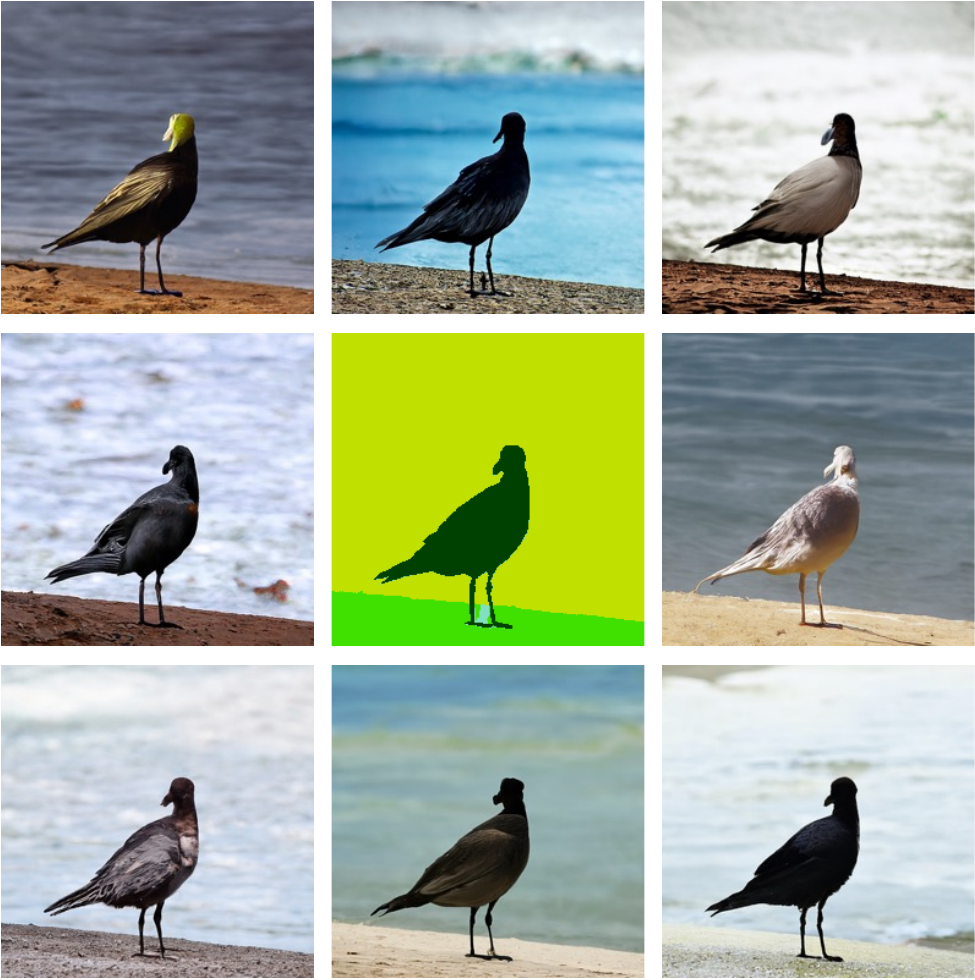}
  \label{fig:supp-diversity-coco}}
\caption{\textbf{Multimodal generation results.}
}
\label{fig:supp_diversity}
\end{figure}
\subsection{More Qualitative Results - SIS with Noisy Labels}
\label{sec:supp-more-noisy}
We provide additional generation results of our method and baselines on SIS with noisy labels. 
The results of \textbf{[DS]}, \textbf{[Edge]}, and \textbf{[Random]} are illustrated in Figure~\ref{fig:supp-qual-noisy-ds}, \ref{fig:supp-qual-noisy-edge}, and \ref{fig:supp-qual-noisy-rand}, respectively, and the experiments are on the ADE20K dataset.
\subsection{More Qualitative Results - Standard SIS Setting}
\label{sec:supp-more-standard}
We present additional generation results of our SCDM and other baselines on the standard SIS setting.
Figure \ref{fig:supp_celeba} shows the results of our method and qualitative comparisons on CelebAMask-HQ.
Figure~\ref{fig:supp_ade20k} and \ref{fig:supp_coco} show the results and qualitative comparisons on ADE20K and COCO-Stuff, respectively.
\begin{figure*}[t!]
\centering
\subfigure[Training data examples with shaft annotated.]{\includegraphics[width=0.45\textwidth]{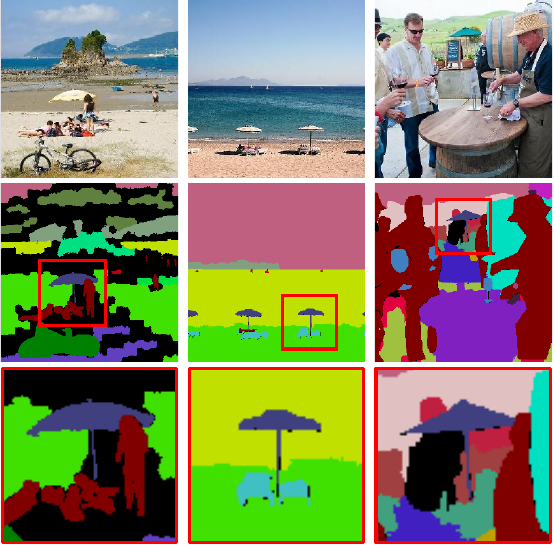}
\label{fig:supp_umbrella_yes}}
\hspace{8pt}
  \subfigure[Training data examples with shaft not annotated.]{\includegraphics[width=0.45\textwidth]{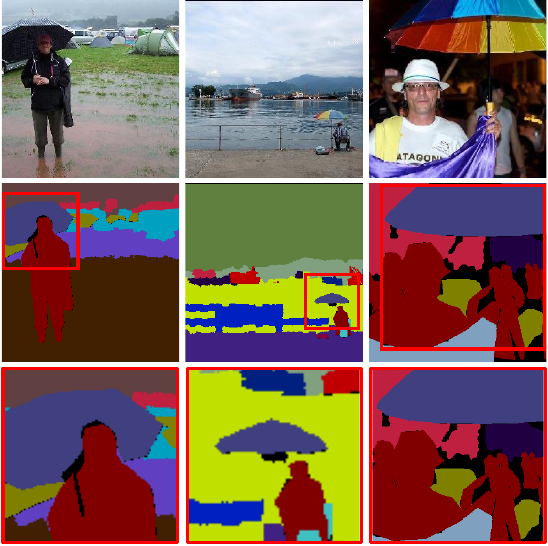}
  \label{fig:supp_umbrella_no}}
\caption{\textbf{Examples of inconsistent semantic labels (class `umbrella') in the training set.}
}
\label{fig:supp_umbrella}
\end{figure*}
\begin{figure*}[t!]
\centering
\includegraphics[width=0.7\textwidth]{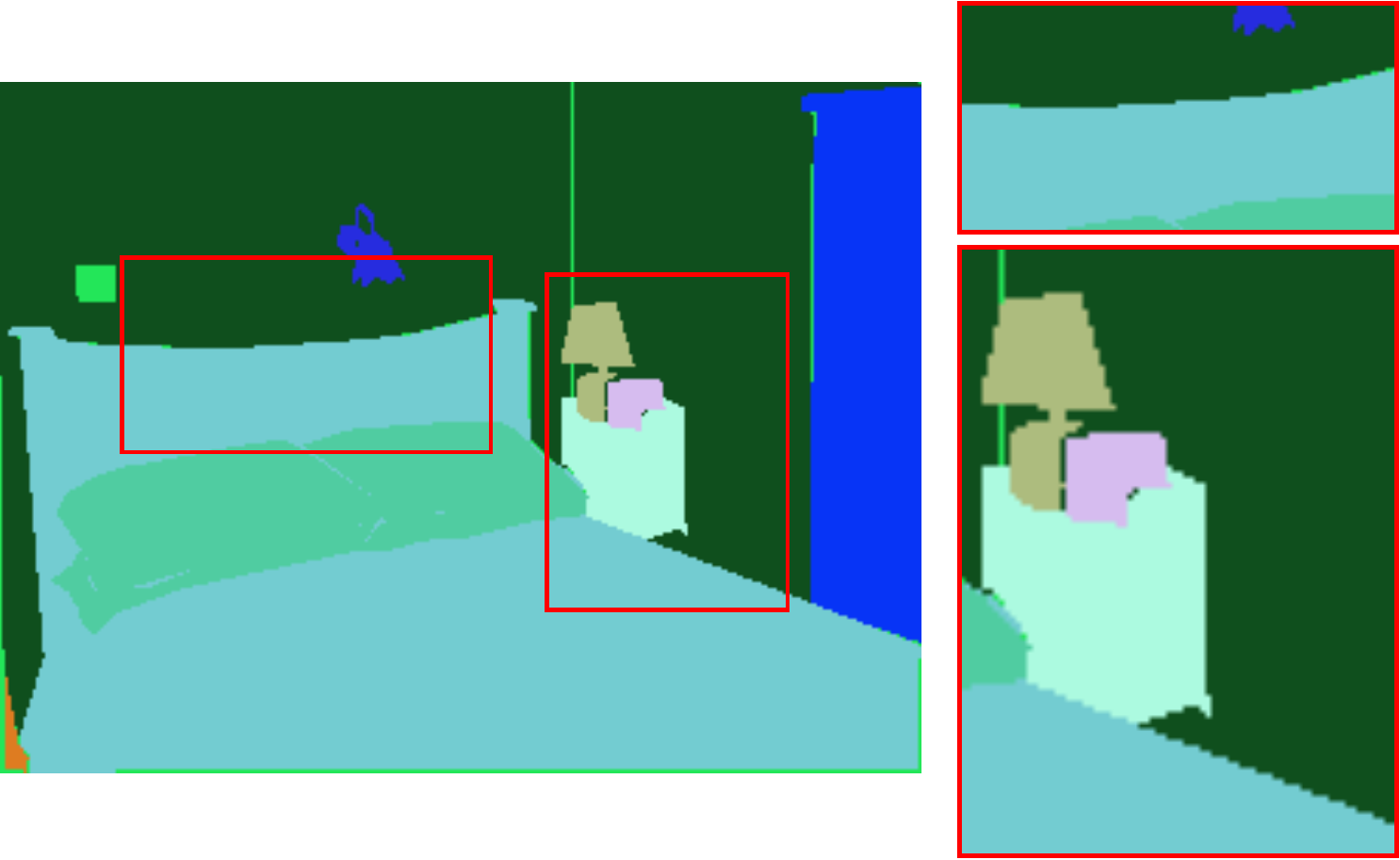}
\caption{ \textbf{Examples of jagged edges in the ADE20K dataset.}
}
\label{fig:supp-jagged}
\vspace{10pt}
\end{figure*}
\begin{figure*}[t!]
\centering
\includegraphics[width=1.0\textwidth]{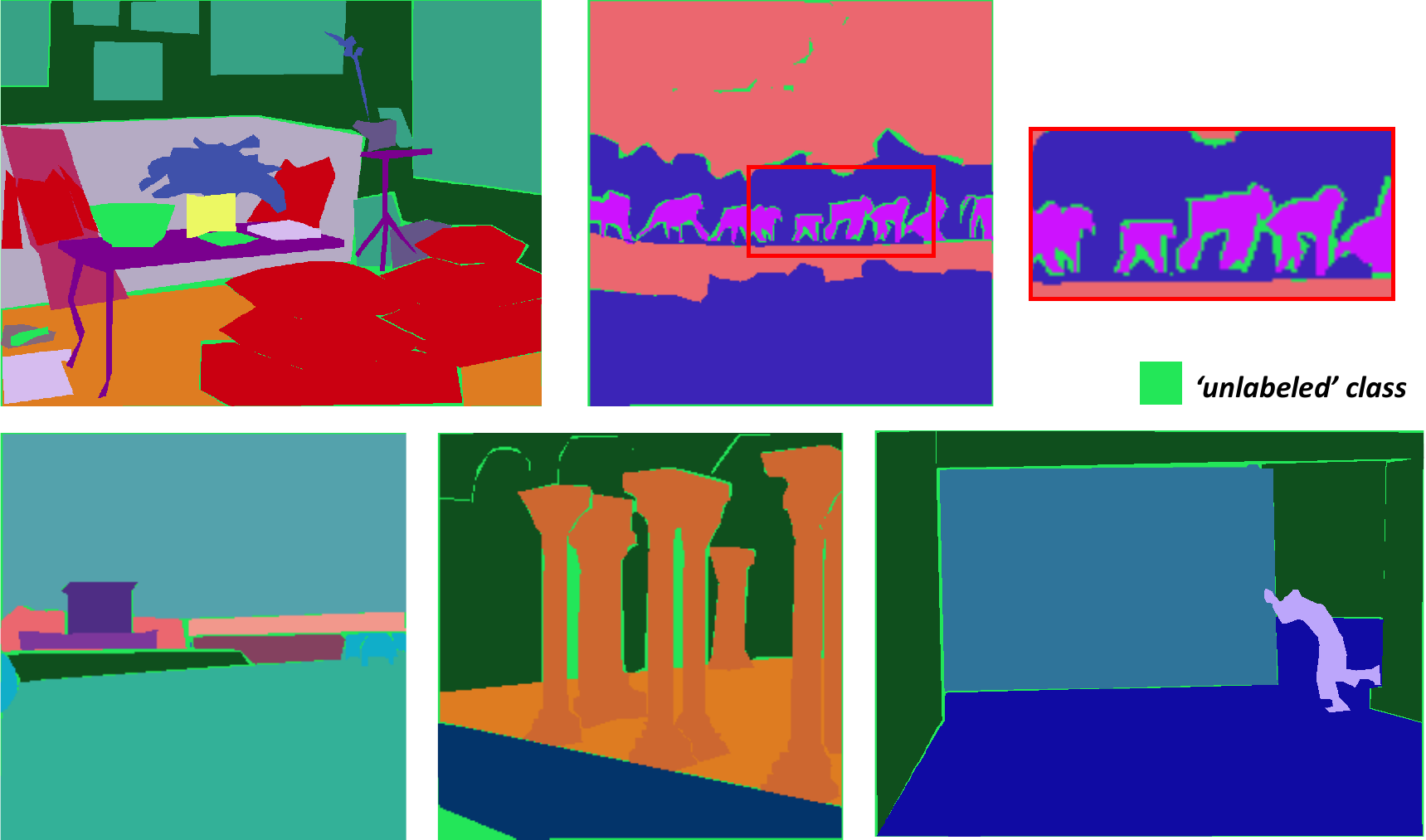}
\caption{ \textbf{Examples of incomplete masks in the \textit{ADE20K} dataset.}
}
\label{fig:supp-incomplete}
\end{figure*}
\begin{figure*}[t!]
\centering
\includegraphics[width=0.85\textwidth]{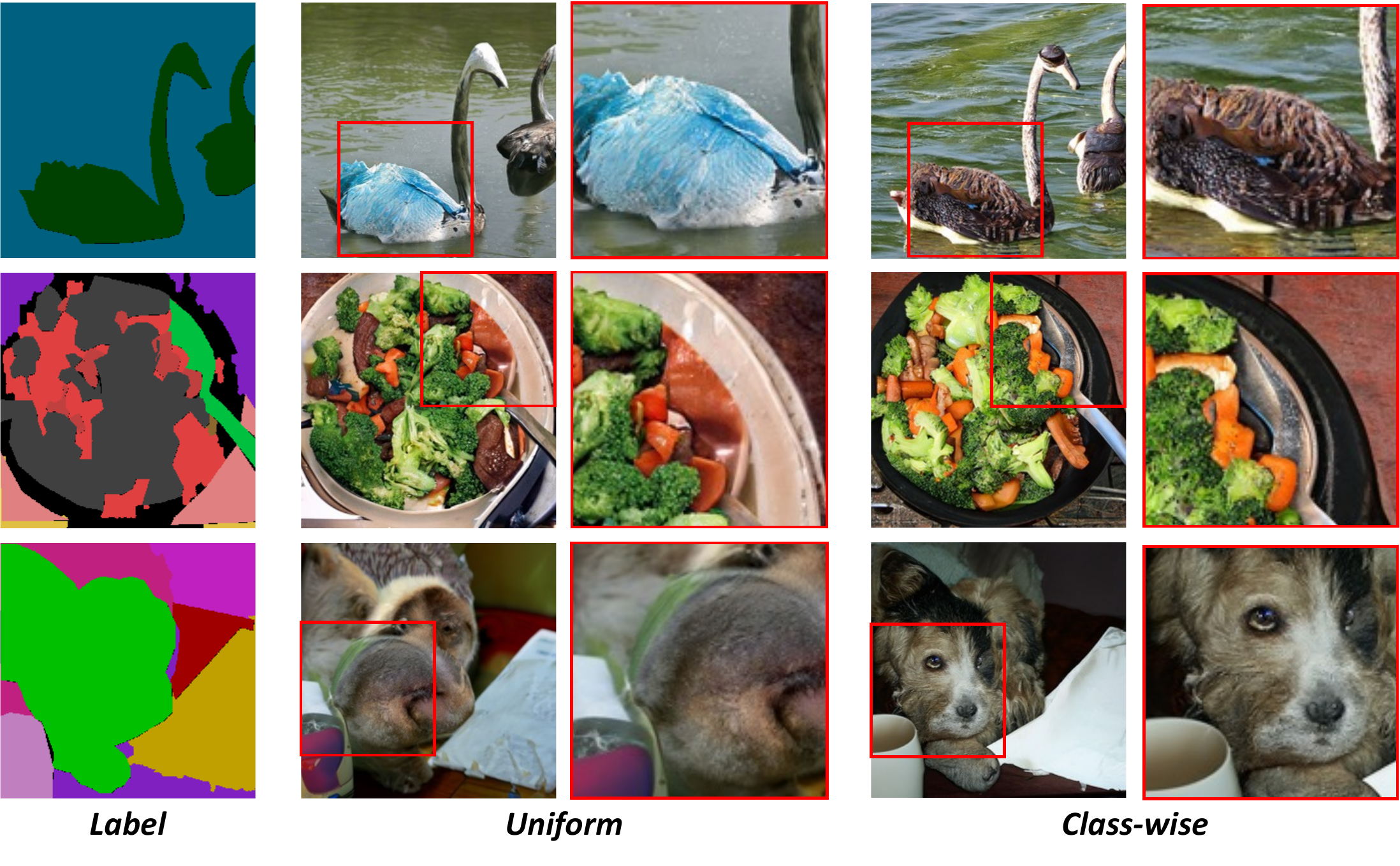}
\caption{ \textbf{Qualitative noise schedule comparison on \textit{COCO-Stuff}.} 
}
\label{fig:supp-coco-classwise}
\end{figure*}
\begin{figure*}[ht!]
\centering
\subfigure[Uniform and linear noise schedule.]{\includegraphics[width=\textwidth]{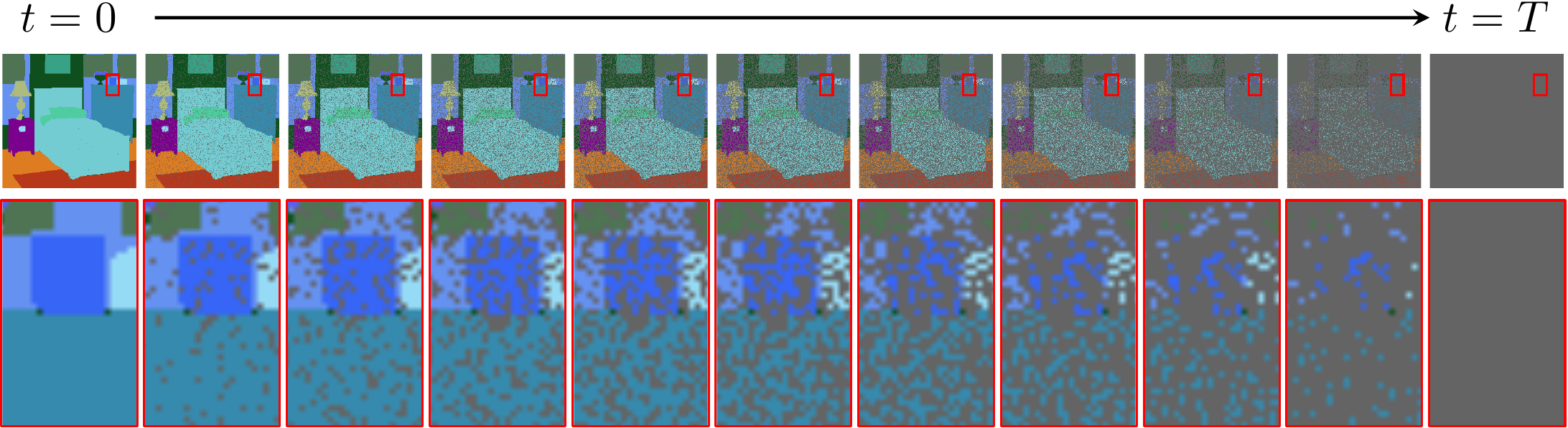}
\label{fig:supp_uniform}}
  \subfigure[Class-wise noise schedule.]{\includegraphics[width=\textwidth]{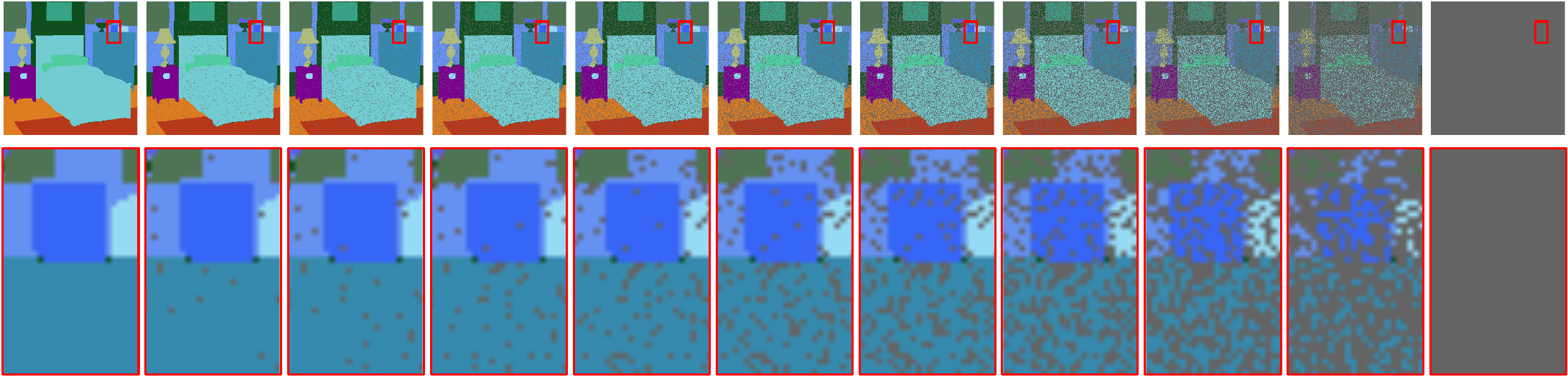}
  \label{fig:supp_classwise}}
\caption{\textbf{Visualization of Label Diffusion intermediate steps.}}
\label{fig:supp_intermediate}
\vspace{-10pt}
\end{figure*}
\begin{figure*}[t!]
\centering
\includegraphics[width=0.85\textwidth]{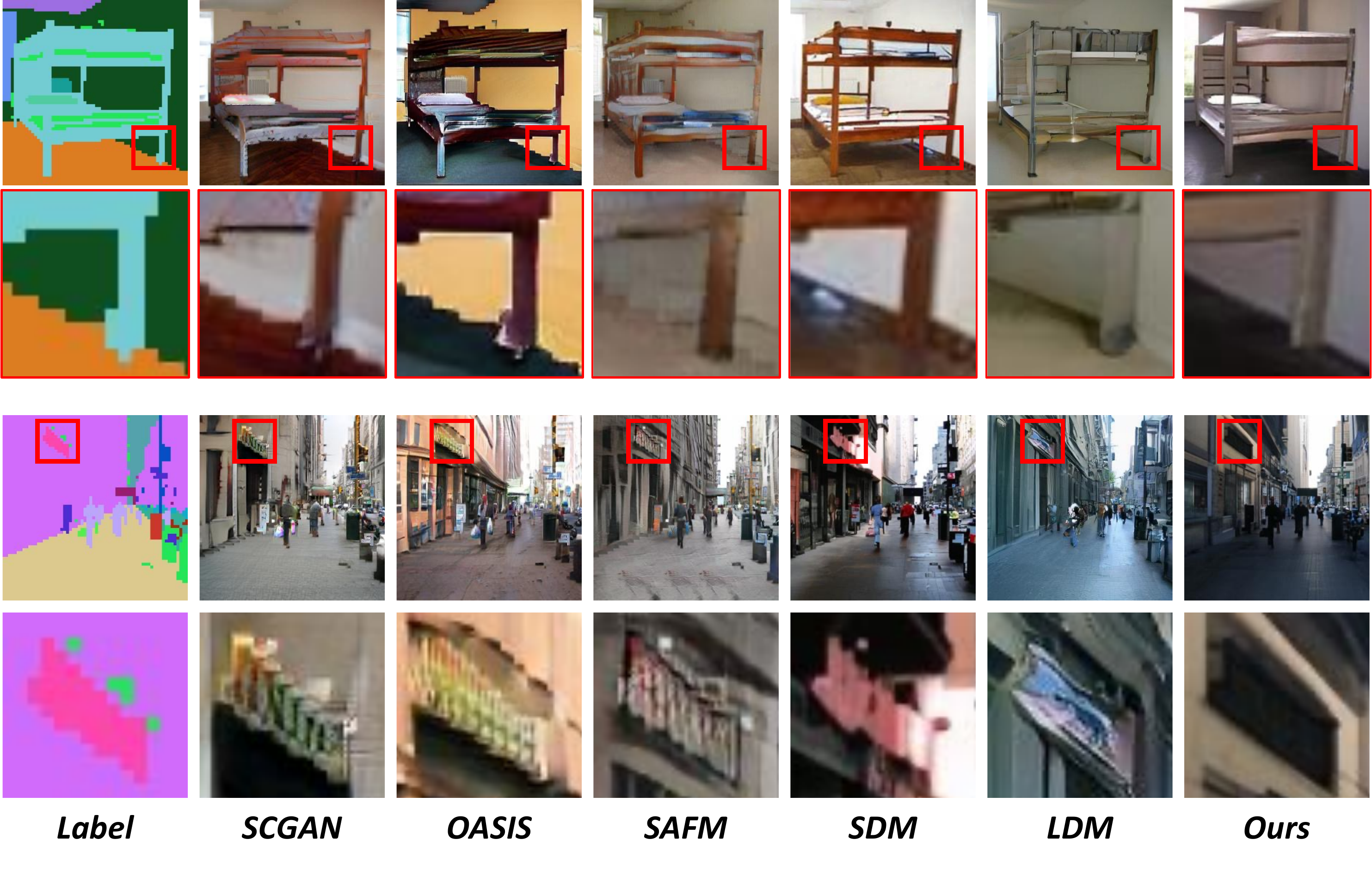}
\caption{ \textbf{More qualitative comparisons on SIS with noisy labels (DS).} The generation results are conditioned with semantic masks with jagged edges.
}
\label{fig:supp-qual-noisy-ds}
\end{figure*}
\begin{figure*}[t!]
\centering
\includegraphics[width=0.85\textwidth]{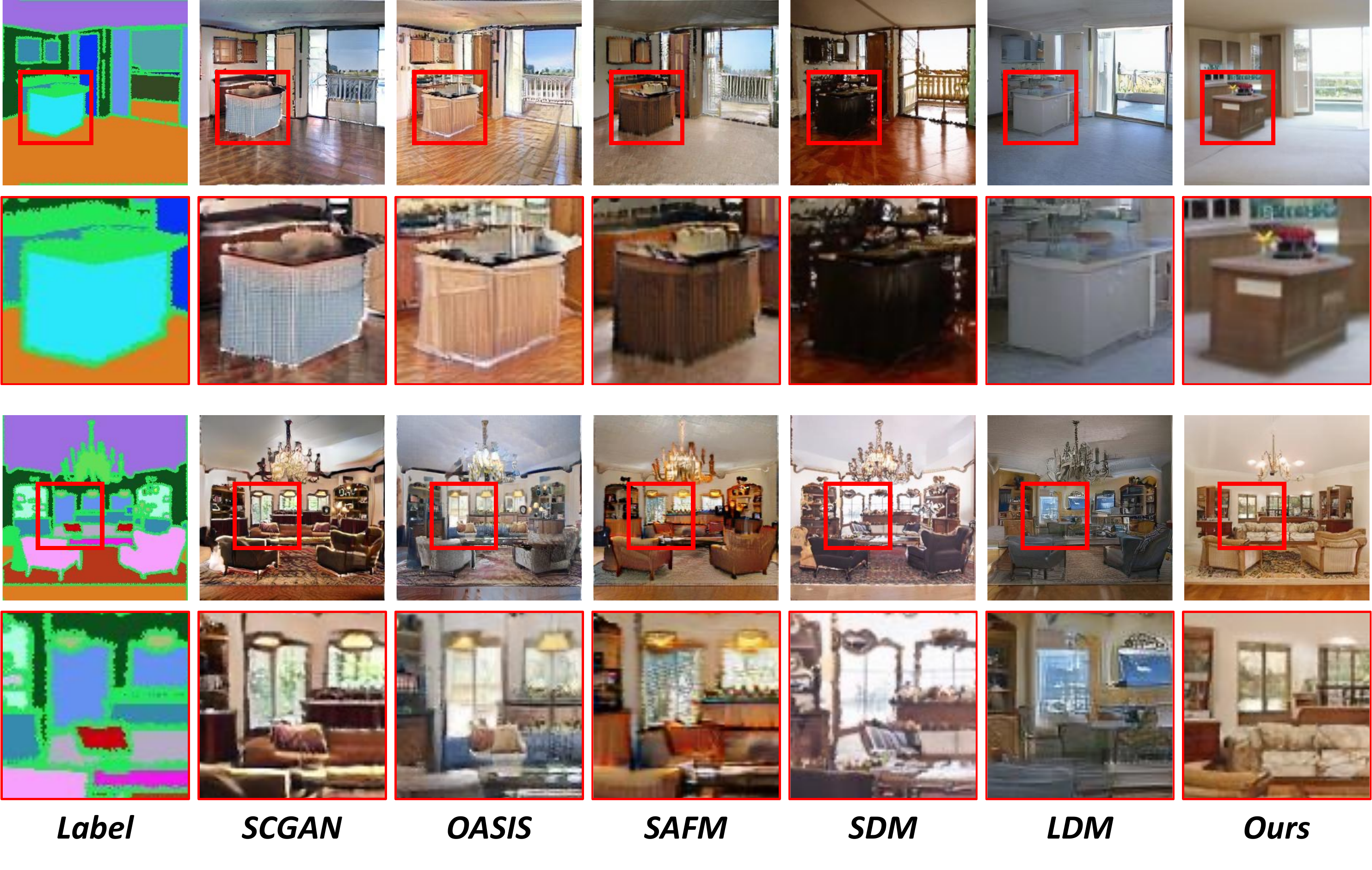}
\caption{ \textbf{More qualitative comparisons on SIS with noisy labels (Edge).} The generation results are conditioned with incomplete masks on the edges of instances.
}
\label{fig:supp-qual-noisy-edge}
\end{figure*}
\begin{figure*}[t!]
\centering
\includegraphics[width=0.85\textwidth]{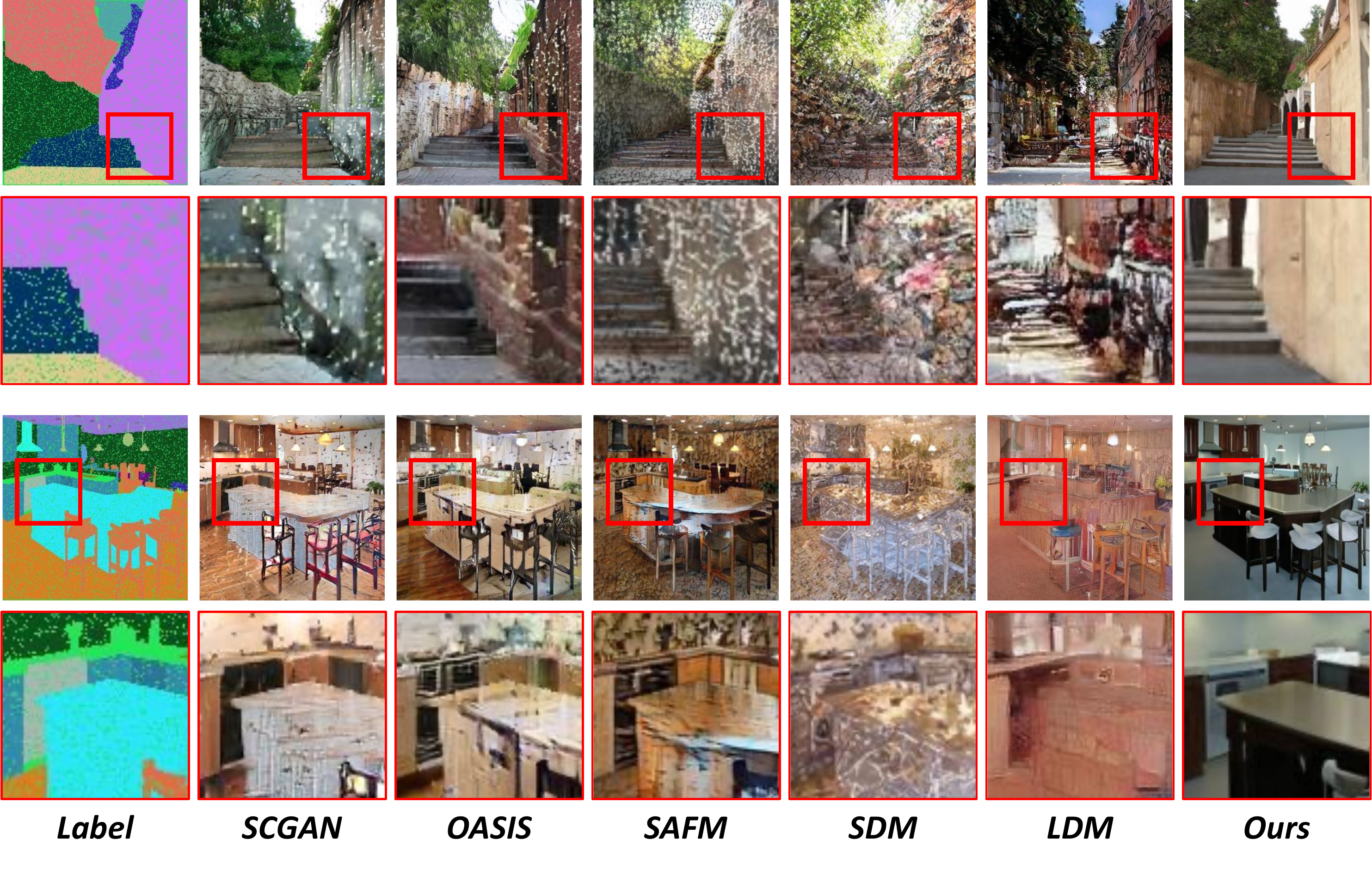}
\caption{ \textbf{More qualitative comparisons on SIS with noisy labels (Random).} The generation results are conditioned with corrupted masks.
}
\label{fig:supp-qual-noisy-rand}
\end{figure*}
\begin{figure*}[ht!]
\centering
\includegraphics[width=0.95\textwidth]{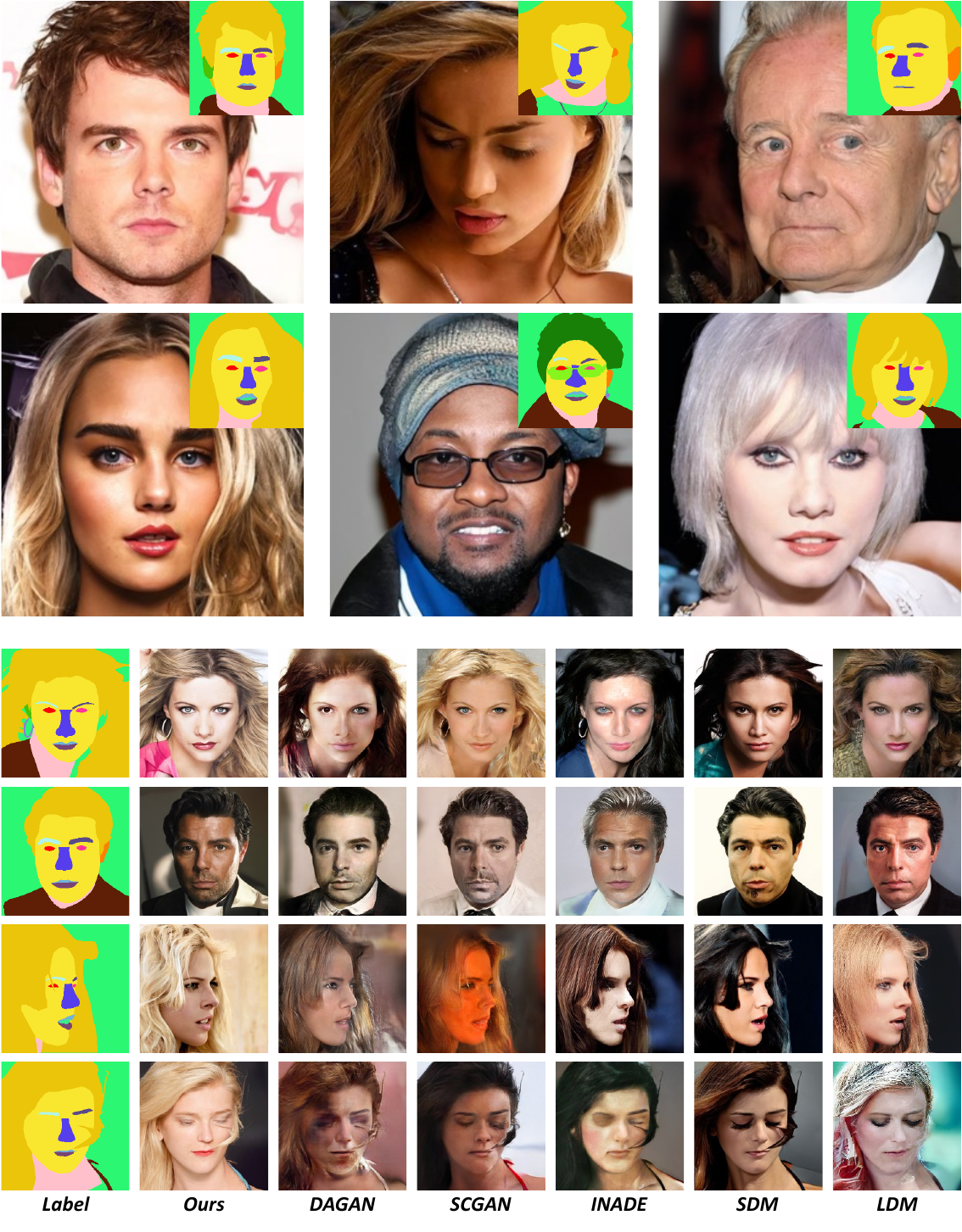}
\caption{ \textbf{More qualitative results and comparisons on \textit{CelebAMask-HQ}.} The first two rows are the result of our model, while the other rows depict qualitative comparisons in terms of generation quality.}
\label{fig:supp_celeba}
\end{figure*}
\begin{figure*}[ht!]
\centering
\includegraphics[width=0.9\textwidth]{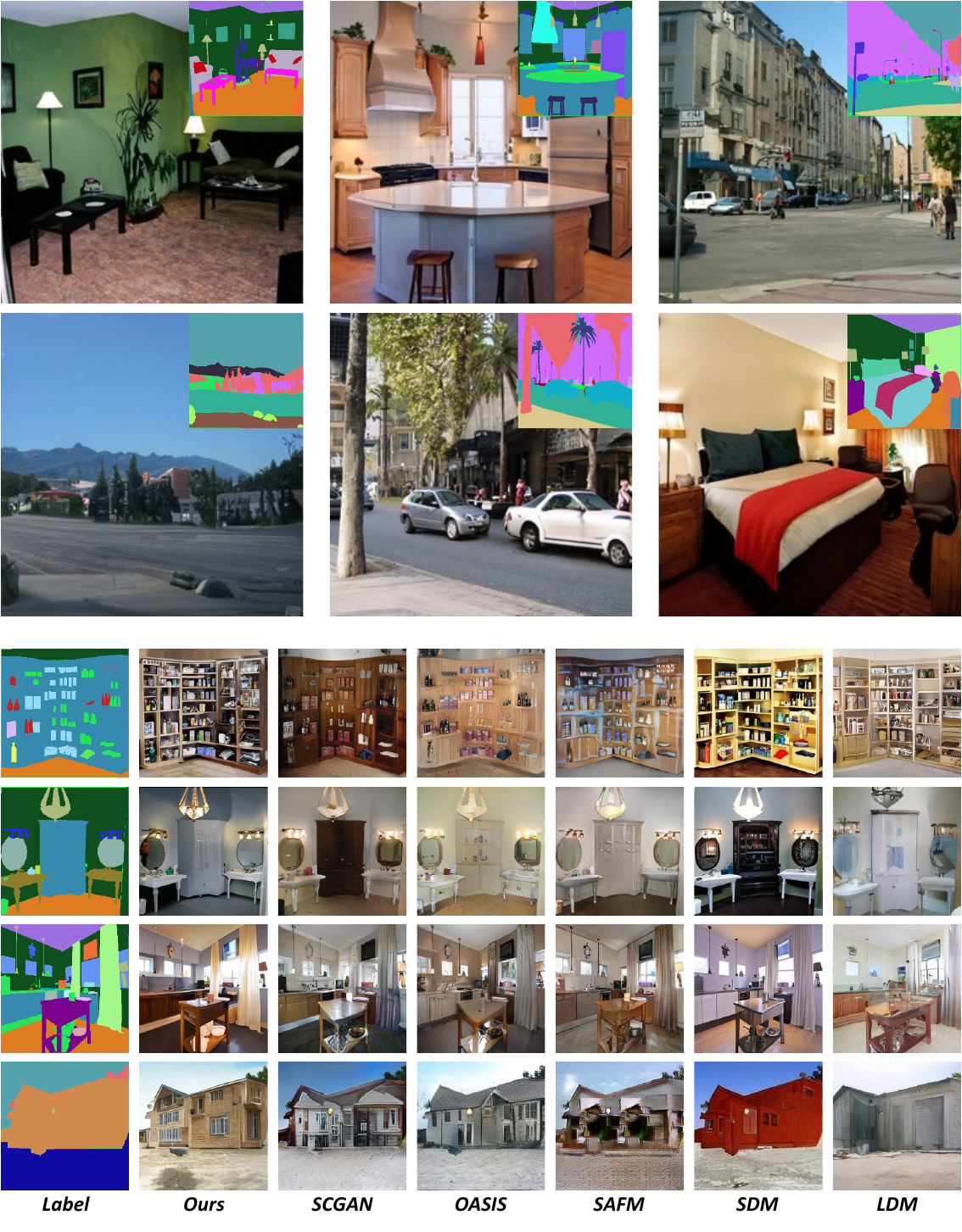}
\caption{ \textbf{More qualitative results and comparisons on \textit{ADE20K}.} The first two rows show the result of our model, while the other rows depict qualitative comparisons in terms of generation quality.
}
\label{fig:supp_ade20k}
\end{figure*}
\begin{figure*}[ht!]
\centering
\includegraphics[width=0.9\textwidth]{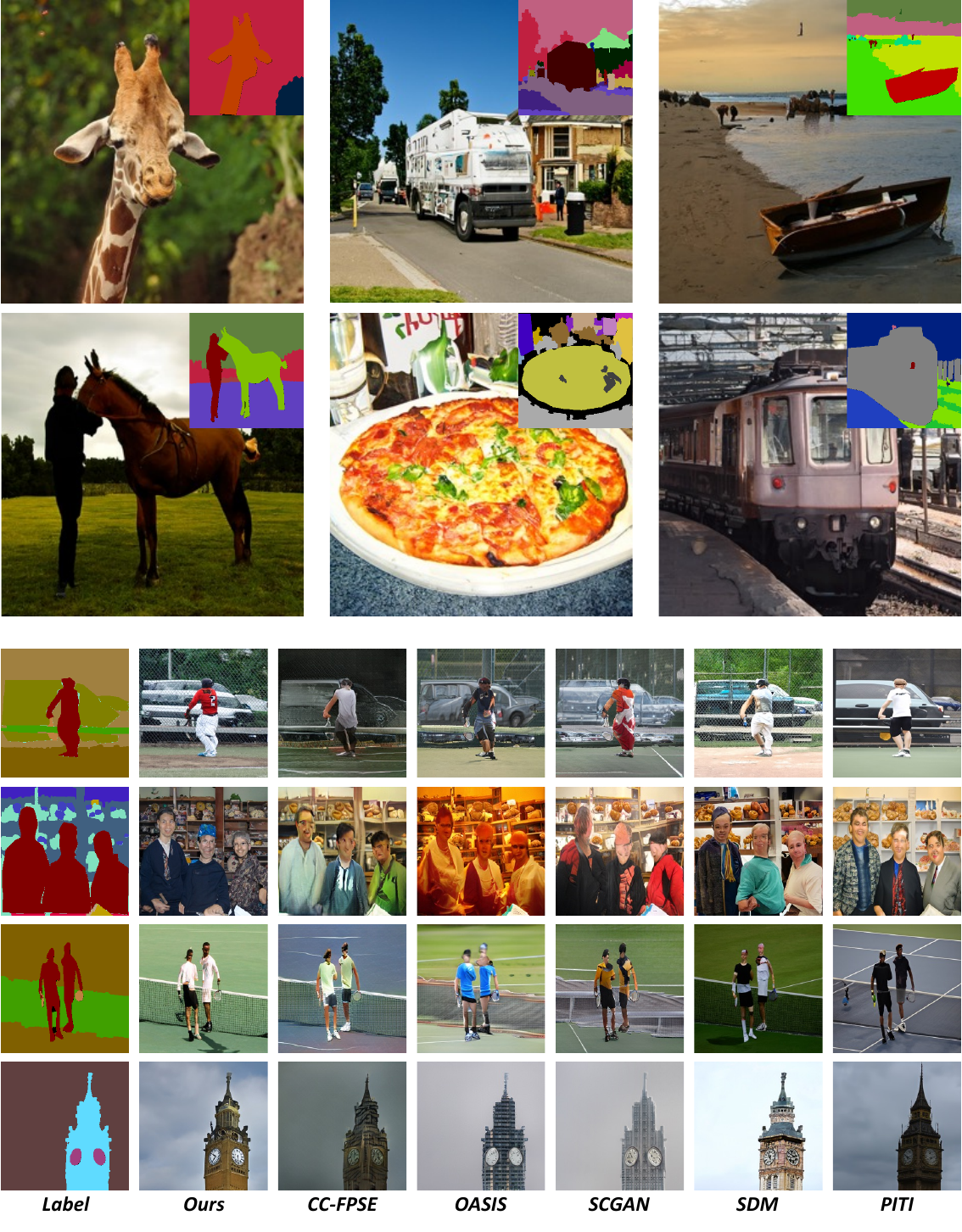}
\caption{ \textbf{More qualitative results and comparisons on \textit{COCO-Stuff}.} The first two rows show the result of our model, while the other rows depict qualitative comparisons in terms of generation quality.
}
\label{fig:supp_coco}
\end{figure*}

\end{document}